\documentclass[lettersize,journal]{IEEEtran}
\usepackage{amsmath,amsfonts}
\usepackage{algorithmic}
\usepackage{algorithm}
\usepackage{array}
\usepackage[caption=false,font=normalsize,labelfont=sf,textfont=sf]{subfig}
\usepackage{textcomp}
\usepackage{stfloats}
\usepackage{url}
\usepackage{verbatim}
\usepackage{graphicx}
\usepackage{cite}
\usepackage{xcolor}
\usepackage{graphicx} 
\usepackage{booktabs} 
\usepackage{booktabs,graphicx}
\DeclareMathOperator*{\argmin}{arg\,min}
\DeclareMathOperator*{\argmax}{arg\,max}
\newtheorem{lemma}{Lemma}
\newtheorem{corollary}{Corollary} 
\newtheorem{theorem}{Theorem} 
\newcommand{\Wtwoone}{W_2^{(1)}}
\newcommand{\Wtwotwo}{W_2^{(2)}}

\begin{document}

\title{Risk and Anomaly Identification  \\ 
	for Distribution Network Optimal Operation   \\
	Based on Reinforcement Learning \\
	and Uncertainty Quantification}
\author{Ziqi Zhang,~\IEEEmembership{Member,~IEEE} 
	%
\thanks{
	Z. Zhang is with the College of Automation Engineering, Nanjing University of Aeronautics and Astronautics, Nanjing 211100, China
	(E-mail: 1124562662@qq.com ). }
	%
 
}



\maketitle

\begin{abstract}
	
Reliable operation of modern distribution networks requires timely identification of operational risks and anomalous events under pervasive uncertainty. In practice, operators must identify risks that are inherent in stochastic yet in-distribution conditions, and anomalies that correspond to out-of-distribution behaviors such as unusual load patterns, extreme weather or cyber-physical attacks. This paper addresses this joint risk and anomaly identification problem for optimal distribution network operation and proposes a deep reinforcement learning framework that is explicitly uncertainty aware. We integrate distributional and Bayesian deep reinforcement learning to realize a second-order uncertainty quantification scheme that decomposes total uncertainty into aleatoric and epistemic components, which are respectively used to characterize inherent risk and out-of-distribution anomalies. The resulting epistemic estimates drive both exploration during training and out-of-distribution detection with fallback control during deployment, whereas aleatoric estimates are used to characterize intrinsic operational risk. Simulation results demonstrate the 
performance of our DRL agent and the effectiveness of the uncertainty quantification. 
	  
\end{abstract}

\begin{IEEEkeywords}
Anomaly detection, deep reinforcement learning, 
distribution network operation, energy management systems, 
uncertainty quantification. 
\end{IEEEkeywords}

\section{Introduction}
\label{sec:intro}

\IEEEPARstart{M}{odern} distribution network (DN) operation is increasingly shaped by renewable energy sources (RES) such as solar photovoltaics (PV), distributed energy storage systems (ESS) and distributed diesel generators (DGs). These factors make operation stochastic, high-dimensional, and time-coupled. Model-free deep reinforcement learning (DRL) \cite{ref5} gives real-time policy for nonlinear DNs.
However, despite DRL's advantages, vanilla DRL techniques face trustworthiness issues when applied to real-world DNs, namely lacking the ability of uncertainty quantification (UQ). In the realm of UQ, uncertainties are generally classified into two categories: (1) epistemic uncertainty (EU) arising from a lack of knowledge or information about the current situation; (2) aleatoric uncertainty (AU) inherent to the environment and cannot be reduced by gathering more information. The combination of the two is termed as total uncertainty.

For a DRL agent, high EU may be caused by out-of-distribution (OOD) scenarios, where the operational environment deviates significantly from training data, contrasting to in-distribution (InD) scenarios where the operational environment remains close to training data. Such distributional shifts may arise from various DN anomalies, encompassing: (1) atypical load profiles, such as variable charging patterns of electric vehicles, air conditioning demand surge due to heatwaves and cold snaps, or spikes because of unplanned public events, all of which introduce complex and non-stationary patterns that are challenging to detect; (2) unexpected RES behaviors, such as storm-induced damage, inverter malfunctions, or irregular cloud shading over PV systems, where spatio-temporal dynamics of cloud movements create a combinatorial explosion of non-uniform shading patterns on distributed PV arrays \cite{ref6}; (3) the deterioration of infrastructures, such as slow degradation of transformer insulation, corrosion of conductors and connectors, and increased impedance in aging cables, which collectively result in small but cumulative shifts in the whole system's physical parameters, often indistinguishable from random noises.

In these OOD scenarios with conditions not seen during training, DRL agents based on neural networks (NNs) might fail to generalize robustly \cite{ref7}. This can result in confident yet suboptimal or unsafe actions, such as improper voltage regulation, excessive reactive power compensation or inefficient diesel generator dispatch. These actions can lead to inefficient energy management, branch overloading, violation of nodal voltage limits, severe load imbalances, potentially causing increased operational costs, widespread power outages, or even physical damage to critical grid components.

The situation can be exacerbated by limited availability of the training data, especially in DNs due to privacy concerns (e.g., smart meters and electric vehicles' privacy) or infrastructure constraints (e.g., sparse monitoring devices). What is worse, the combination of the time-series data and the large number of DN buses contributes to the high dimensionality of the scenario, bringing the curse-of-dimensionality phenomenon that most points tend to have similar distances among them \cite{ref8}. Such phenomenon, along with the complex interpolation and extrapolation of NNs, may render traditional OOD detection approaches such as those based on metrics or density estimation \cite{ref9} less effective. A related challenge to the OOD problem is the presence of adversarial examples \cite{ref10}. These examples are much closer to the training data distribution, with their differences from certain training samples often resembling small perturbations. Despite these subtle differences, adversarial examples can still lead to incorrect classifications by neural networks \cite{ref10,refAP}. In the context of power systems, such perturbations can be deliberately injected by malicious attackers into grid cyber-physical system through false data injection attacks (FDIA) \cite{ref11,ref12}, which can evade conventional anomaly detection in the SCADA system. The difficulties mentioned necessitate the quest for advanced EU estimation methods.

In the realm of DN, AU represents the irreducible stochasticity of the grid, including fluctuations in RES \cite{ref13}, electricity prices or load behaviors, that cannot be reduced through further data collection. 
AU-aware methods model a whole distribution of returns, not just the scalar expectation~\cite{ref20}. 
Effectively capturing such inherent stochasticity is helpful because it enables the system to avoid high-risk actions that might otherwise appear optimal based on expected returns but could lead to detrimental worst-case outcomes.

By juxtaposing the characteristics of AU and EU, it becomes evident that the two types of uncertainty require distinct handling strategies. However, most previous methods concentrate on reducing the probability of constraint violations~\cite{ref14,ref15}, or mainly estimate the combined total uncertainty without differentiating between its sources.  
Existing studies address only part of the uncertainty-aware control problem. 
Constrained and safe DRL methods improve feasibility in DN operation, but mainly under in-distribution stochasticity and without identifying when the learned controller becomes epistemically unreliable \cite{Wang2020VoltVar,ref23}. 
Risk-sensitive and distributional RL methods capture return variability and tail risk, yet they do not explicitly distinguish irreducible environmental randomness from model uncertainty \cite{ref16,ref17}. 
Bayesian DRL and probabilistic OPF approaches introduce predictive uncertainty into decision making, but generally treat uncertainty as an aggregated quantity rather than decomposing it into AU and EU with different operational roles \cite{ref18,ref19}. 
Bayesian UQ has also been used for power-system forecasting and risk assessment, but mostly in offline prediction or assessment settings rather than in closed-loop DRL control with OOD-triggered fallback \cite{Liu2024TIIrisk,Yang2020TIIload}. 
  
The lack of UQ in naive DRL poses a barrier to its deployment in real-world safety-critical DN operations. To close the gaps mentioned, we aim to develop an uncertainty-aware framework to handle OOD cases and manage risks, subsuming previous distributional DRL and Bayesian DRL methods as submodules. The main contributions are summarized as follows:
 
\begin{itemize}
	\item[(1)] 
	We formulate risk and anomaly identification in distribution-network operation as a 
	\emph{return-law uncertainty quantification} problem.
	Previous methods consider only a single type of uncertainty, whereas our approach allows the total uncertainty to be further split into AU and EU. Specifically, we adopt a second-order UQ method to distinctly characterize risks and anomalies.

	\item[(2)] We develop a critic-side return-space instantiation of distance-based second-order UQ, beyond the original finite-label categorical instantiation. By using one-dimensional quantile representations and the Wasserstein barycenter, we convert the original distance-based definitions into tractable forms for critic-side AU and EU. We establish the equivalence of these forms to the original definitions and prove the Monte Carlo consistency of their empirical estimators.

	\item[(3)] We propose an epistemic-reliability gated deployment scheme that
	turns critic-side EU into an online fallback indicator.  
	When the actor's action becomes epistemically unreliable, control is switched from fast DRL inference to a conservative MISOCP-based OPF controller. Thus, EU is not used as a passive anomaly score, but as an
	operational switching signal that couples data-driven control in familiar regimes with optimization-based recovery under distribution shift.
	
\end{itemize}
The rest of this paper is organized as follows. Section~II rigorously formulates the DN operation problem. Section~III proposes the main framework. Section~IV conducts numerical simulations to demonstrate effectiveness of the proposed method. 
Section~V presents the conclusions, and the Appendix provides detailed proofs.

\section{Optimal DN Operation Formulation}
\label{sec:DN_formulation}

We first cast the DN operation problem into a constrained Markov decision process (CMDP), then specify the physical model and operational constraints of the DN.

\subsection{Preliminaries on CMDP}

We consider a discounted constrained Markov decision process (CMDP)
\(
\langle \mathcal{S}, \mathcal{A}, P, R, C, \gamma \rangle
\),
where \(\mathcal{S}\) is the state space, \(\mathcal{A}\) is the action space,
\(P : \mathcal{S} \times \mathcal{A} \times \mathcal{S} \to [0,1]\) is the transition kernel,
\(R : \mathcal{S} \times \mathcal{A} \to \mathbb{R}\) is the reward function,
\(C : \mathcal{S} \times \mathcal{A} \to \mathbb{R}^m\) is a vector of constraint costs,
and \(\gamma \in [0,1)\) is the discount factor.
A stationary policy \(\pi : \mathcal{S} \to \Delta(\mathcal{A})\) maps each state
to a distribution \(\pi(\cdot \mid s)\) over actions
(in practice we often use deterministic policies, written as \(a_t = \pi(s_t)\)).
For an initial state \(s_0\), the discounted return and constraint returns of policy \(\pi\) are
\begin{align}
	J_R^\pi(s_0)
	&= \mathbb{E}^\pi \!\left[
	\sum_{t=0}^{\infty} \gamma^t R(s_t, a_t)
	\right],
	\label{eq:cmdp_return}\\[0.2em]
	J_{C_i}^\pi(s_0)
	&= \mathbb{E}^\pi \!\left[
	\sum_{t=0}^{\infty} \gamma^t C_i(s_t, a_t)
	\right],
	\quad i = 1,\dots,m.
	\label{eq:cmdp_constraint_return}
\end{align}
The CMDP objective is
\begin{align}
	\max_{\pi} \quad & J_R^\pi(s_0),
	\label{eq:cmdp_obj}\\
	\text{s.t.} \quad & J_{C_i}^\pi(s_0) \le d_i,
	\quad i = 1,\dots,m.
	\label{eq:cmdp_constraints}
\end{align} 
We employ Lagrangian relaxation to address the constrained optimization.
Instead of maintaining separate multipliers for each constraint, we aggregate all operational limits into a unified scalar cost \(C(s,a)\) associated with a single multiplier \(\lambda \ge 0\), and formulate the Lagrangian objective as:
$
	L(\pi,\lambda)
	=
	J_R^\pi(s_0)
	-
	\lambda \bigl( J_{C}^\pi(s_0) - d \bigr),
	\label{eq:lagrangian}
$
where \(J_{C}^\pi(s_0)\) denotes the expected discounted cumulative aggregated cost and \(d\) is the tolerance limit.
This yields the saddle point problem:
$
	\max_{\pi}   \min_{\lambda \ge 0}   L(\pi,\lambda),
	\label{eq:cmdp_dual}
$
which we solve by primal--dual updates. 
At iteration $k$, given policy $\pi_k$, we update the Lagrange multiplier (LM) by projected gradient ascent on the dual:
$
	\lambda_{k+1}
	=
	\Pi_{\mathbb R_{+}}
	(
	\lambda_k
	+
	\eta_\lambda\bigl(J_C^{\pi_k}(s_0)-d\bigr)
	),
	\label{eq:lambda_update}
$
where $\eta_\lambda>0$ is a stepsize and $\Pi_{\mathbb R_{+}}(x)=\max\{0,x\}$ enforces $\lambda\ge 0$.
For a fixed \(\lambda\), this is equivalent to solving an unconstrained MDP with the Lagrangian reward:
\begin{equation}
	R_\lambda(s,a)
	=
	R(s,a) - \lambda C(s,a).
	\label{eq:lagrangian_reward}
\end{equation}
The corresponding state--action value function is
\begin{equation}
	Q_\lambda^\pi(s,a)
	=
	\mathbb{E}^\pi \!\left[
	\sum_{t=0}^{\infty} \gamma^t R_\lambda(s_t, a_t)
	\,\Big|\, s_0 = s, a_0 = a
	\right],
	\label{eq:lagrangian_q}
\end{equation}
satisfying the Bellman equation:
\begin{equation}
	Q_\lambda^\pi(s,a)
	=
	R_\lambda(s,a)
	+
	\gamma \,
	\mathbb{E}_{s' \sim P(\cdot \mid s,a)}
	\bigl[ Q_\lambda^\pi(s', \pi(s')) \bigr].
	\label{eq:bellman_lagrangian_q}
\end{equation} 
Instead of the scalar value \(Q_\lambda^\pi(s,a)\), distributional reinforcement learning
models the full return distribution.
Let \(Z^\pi(s,a)\) denote a random variable with the same distribution as the discounted
return under \(R_\lambda\) starting from \((s,a)\).
It satisfies the distributional Bellman equation:
\begin{equation}
	Z^\pi(s,a)
	\;\stackrel{d}{=}\;
	R_\lambda(s,a)
	+
	\gamma \,
	Z^\pi(s', a'),
	\label{eq:dist_bellman}
\end{equation}
where \(\stackrel{d}{=}\) denotes equality in distribution,
\(s' \sim P(\cdot \mid s,a)\) and \(a' \sim \pi(\cdot \mid s')\).

\subsection{Physical model and operational constraints}

We instantiate the CMDP tuple by a radial DN model that determines
the state and action spaces, the transition kernel, the operating
cost, and the operational constraints. The physical model specifies
how the state evolves under control actions, the economic objective
defines the reward, and the engineering limits induce the CMDP
constraints in \eqref{eq:cmdp_return}–\eqref{eq:cmdp_constraints}.

\subsubsection{States and actions}

Consider a radial DN with bus set $\mathcal{N}$, line set $\mathcal{L}$,
distributed generators (DGs) $\mathcal{G}$, energy storage systems (ESSs) $\mathcal{E}^{\mathrm{ESS}}$,
and shunt capacitor banks (SCBs) $\mathcal{C}$, operated in discrete time
$t = 0,1,\dots$ with step length $\Delta t$.
All power quantities are defined positive when injected into the DN,
and reactive power $Q>0$ (resp. $Q<0$) denotes inductive (resp. capacitive) injection.
The state $s_t \in \mathcal{S}$ collects bus voltage magnitudes $V_{i,t}$ and
angles $\theta_{i,t}$ for $i \in \mathcal{N}$, DG outputs, ESS states of charge (SOC),
tap positions of on-load tap changers (OLTCs), SCB statuses, and relevant exogenous
information such as load and renewable forecasts.
The action $a_t \in \mathcal{A}$ aggregates DG active and reactive set-points
$\{P_{i,t}^{\mathrm{DG}}, Q_{i,t}^{\mathrm{DG}}\}_{i \in \mathcal{G}}$,
ESS charging and discharging powers
$\{P_{k,t}^{\mathrm{ch}}, P_{k,t}^{\mathrm{dis}}\}_{k \in \mathcal{E}^{\mathrm{ESS}}}$,
SCB switching commands and OLTC tap positions.

\subsubsection{Operating cost and CMDP reward}

The economic objective is encoded directly in the CMDP stage reward.
For each state–action pair $(s_t,a_t)$, the reward is defined as the
negative total operating cost, including energy purchase from the
upstream grid, local generation costs, and network losses,
\begin{align}
	R(s_t, a_t)
	&=
	-\Big(
	P^{\mathrm{grid}}_t \, \mathcal{P}_t
	+
	\sum_{i \in \mathcal{G}}
	\big(
	a_i (P_{i,t}^{\mathrm{DG}})^2
	+ b_i P_{i,t}^{\mathrm{DG}}
	+ c_i
	\big)
	\nonumber\\
	&\qquad\quad
	+
	\alpha_{\mathrm{loss}}
	\sum_{(i,j) \in \mathcal{L}} P_{ij,t}^{\mathrm{loss}}
	\Big),
	\label{eq:operating_cost_reward}
\end{align}
where $\mathcal{P}_t$ is the electricity price, $a_i, b_i, c_i$ are DG
cost coefficients, $P_{ij,t}^{\mathrm{loss}}$ is the active loss on line
$(i,j)$, and $\alpha_{\mathrm{loss}} > 0$ converts losses into monetary
units. 

\subsubsection{Operational constraints}

Operational security limits and device ratings are modeled as inequality
constraints that define the feasible set of $(s_t,a_t)$.
These inequalities are encoded in the CMDP through the constraint
functions $C_i$ and bounds $d_i$ in
\eqref{eq:cmdp_constraint_return}–\eqref{eq:cmdp_constraints}.
Voltage magnitude limits and line thermal limits are enforced as
$
	V_i^{\min}
	 \le
	V_{i,t}
	\le
	V_i^{\max},
	  i \in \mathcal{N},
	\label{eq:voltage_limits}$
	$  
	|I_{ij,t}| \le
	I_{ij}^{\max},
	  (i,j) \in \mathcal{L},
	\label{eq:current_limits}
$
where $V_i^{\min}, V_i^{\max}$ and $I_{ij}^{\max}$ are voltage and current limits.
Each ESS $k \in \mathcal{E}^{\mathrm{ESS}}$ with rated energy $E_k^{\mathrm{ESS}}$
satisfies the SOC dynamics
$
	\mathrm{SOC}_{k,t+1}
	=
	\mathrm{SOC}_{k,t}
	+
	\frac{\Delta t}{E_k^{\mathrm{ESS}}}
	\left(
	\eta_k^{\mathrm{ch}} P_{k,t}^{\mathrm{ch}}
	-
	\frac{1}{\eta_k^{\mathrm{dis}}} P_{k,t}^{\mathrm{dis}}
	\right),
	\label{eq:soc_dynamics}
$
together with SOC and power bounds
$
	\mathrm{SOC}_k^{\min}
	\le
	\mathrm{SOC}_{k,t}
	\le
	\mathrm{SOC}_k^{\max},
	\label{eq:soc_bounds} $
	$
	0
	\le
	P_{k,t}^{\mathrm{ch}}
	\le
	P_k^{\mathrm{ch},\max},$
    $
	0
	\le
	P_{k,t}^{\mathrm{dis}}
	\le
	P_k^{\mathrm{dis},\max},
	\label{eq:ess_power_bounds}
$
where $\eta_k^{\mathrm{ch}}, \eta_k^{\mathrm{dis}} \in (0,1]$ are charging and
discharging efficiencies.
Device-level constraints for each DG $i \in \mathcal{G}$ are
$
	0
	\le
	P_{i,t}^{\mathrm{DG}}
	\le
	P_{i}^{\mathrm{DG},\max},
	\label{eq:dg_active_bound} $
	$
	\left(P_{i,t}^{\mathrm{DG}}\right)^2
	+
	\left(Q_{i,t}^{\mathrm{DG}}\right)^2
	\le
	\left(S_i^{\mathrm{DG},\max}\right)^2,
	\label{eq:dg_apparent_limit} $
	$
	\cos\phi_i^{\min}
	\le$
	$
	\frac{P_{i,t}^{\mathrm{DG}}}{\sqrt{
			\left(P_{i,t}^{\mathrm{DG}}\right)^2
			+
			\left(Q_{i,t}^{\mathrm{DG}}\right)^2}}
	\le$
	$
	\cos\phi_i^{\max},
	\label{eq:dg_pf}
$
where $P_i^{\mathrm{DG},\max}$, $S_i^{\mathrm{DG},\max}$ and
$\cos\phi_i^{\min}, \cos\phi_i^{\max}$ denote the corresponding ratings.
Each SCB $c \in \mathcal{C}$ has $n_c^{\max}$ identical steps.
Let $n_{c,t} \in \{0,\dots,n_c^{\max}\}$ be the number of energized steps
and $Q_c^{\mathrm{step}}$ the reactive power per step. Then
$
	Q_{c,t}^{\mathrm{SCB}}
	=
	n_{c,t} \, Q_c^{\mathrm{step}},
    $ $
	0 \le n_{c,t} \le n_c^{\max}.
	\label{eq:scb}
$
OLTC tap positions are likewise restricted to discrete sets.
\subsubsection{Network constraints and AC power flow}

Let $G_{ij} + j B_{ij}$ be the $(i,j)$th admittance matrix entry
and $\mathcal{N}_i$ the set of buses connected to bus $i$.
The nodal injections satisfy the AC power flow equations
$
	P_{i,t}^{\mathrm{inj}}
	=$
	$
	V_{i,t}
	\sum_{j \in \mathcal{N}_i}
	V_{j,t}
	$
	$
	\Big[
	G_{ij} \cos(\theta_{i,t} - \theta_{j,t})
	\nonumber  
	+
	B_{ij} \sin(\theta_{i,t} - \theta_{j,t})
	\Big],
	\label{eq:pf_active} $
	$
	Q_{i,t}^{\mathrm{inj}}
	 =
	V_{i,t}
	\sum_{j \in \mathcal{N}_i}
	V_{j,t}
	\Big[
	G_{ij} \sin(\theta_{i,t} - \theta_{j,t})
	\nonumber  
	-
	B_{ij} \cos(\theta_{i,t} - \theta_{j,t})
	\Big].
	\label{eq:pf_reactive}
$
For each non-slack bus $i$,
$
	P_{i,t}^{\mathrm{inj}}
	 =
	P_{i,t}^{\mathrm{DG}}
	+ 
	P_{i,t}^{\mathrm{dis}}
	-
	P_{i,t}^{\mathrm{ch}}
	-
	P_{i,t}^{\mathrm{load}},
	\label{eq:active_balance}$
	$
	Q_{i,t}^{\mathrm{inj}}
	 =
	Q_{i,t}^{\mathrm{DG}}
	+
	Q_{i,t}^{\mathrm{SCB}}
	-
	Q_{i,t}^{\mathrm{load}},
	\label{eq:reactive_balance}
$
while the slack bus injection includes the imported power $P^{\mathrm{grid}}_t$.
For notational compactness, bus-indexed device variables such as
$P_{i,t}^{\mathrm{DG}},Q_{i,t}^{\mathrm{DG}},P_{i,t}^{\mathrm{ch}},P_{i,t}^{\mathrm{dis}},Q_{i,t}^{\mathrm{SCB}}$
are understood as the aggregated injections of all corresponding devices connected to bus $i$ at time $t$,
and are defined to be zero if bus $i$ hosts no such device.

    
\subsubsection{Scalar constraint cost}

The operational limits of DGs, ESSs, SCBs, and OLTCs are handled by feasible
action parameterization, action clipping, and device operating bounds.
Accordingly, the scalar constraint cost \(C(s_t,a_t)\) used in the
primal--dual update is constructed from the residual network-security terms
evaluated after the AC power-flow calculation, including voltage violations,
line or transformer overloading, and power-flow non-convergence. 
Let \(V_{i,t}\) denote the voltage magnitude at bus \(i\), and let
\(\ell_{e,t}\) denote the loading percentage of line or transformer \(e\).
The aggregate voltage violation is defined as
\(\nu_V(s_t,a_t)=\sum_{i\in\mathcal N}
\left([V_i^{\min}-V_{i,t}]_+ + [V_{i,t}-V_i^{\max}]_+\right)\),
and the aggregate thermal-loading violation is defined as
\(\nu_L(s_t,a_t)=\sum_{e\in\mathcal L\cup\mathcal T}[\ell_{e,t}-100]_+\).
Here, \(\nu_V\) is measured in p.u., whereas \(\nu_L\) is measured in
percentage points. 
To combine these residual terms into a scalar constraint cost, normalization
is absorbed into the penalty coefficients. The resulting constraint cost is
\(C(s_t,a_t)=w_V\nu_V(s_t,a_t)+w_L\nu_L(s_t,a_t)+w_{\mathrm{pf}}I_{\mathrm{pf},t}\),
where \(w_V\), \(w_L\), and \(w_{\mathrm{pf}}\) are penalty weights, and
\(I_{\mathrm{pf},t}\in\{0,1\}\) indicates whether the AC power flow fails to
converge.

\section{Uncertainty-Aware DRL Framework}
\label{sec:framework}

\subsection{Framework Overview}

Building on the CMDP formulation in Section~\ref{sec:DN_formulation}, we
construct an uncertainty-aware DRL framework with three components:
a distributional actor–critic backbone, a second-order UQ module, and an uncertainty-aware training and deployment
scheme. The backbone is a deterministic actor–critic architecture with an
implicit quantile network (IQN)-based critic~\cite{ref22} that approximates the return distribution under the
Lagrangian reward \(R_\lambda\), so that both performance and CMDP
constraints are handled in a unified way.

To quantify uncertainty, the critic produces multiple stochastic
approximations of the return distribution for each state–action pair,
using Monte Carlo dropout \cite{ref24} or deep ensembles
\cite{ref25}. 
These predictive distributions are aggregated to obtain total predictive
uncertainty and then processed by a distance-based second-order scheme
to decompose it into aleatoric and epistemic components. The resulting
AU captures irreducible stochasticity from RES, loads, and prices, while
EU reflects model uncertainty due to limited data or distributional
shift, including OOD conditions and cyber-physical attacks.

During training, EU serves as a curiosity signal to encourage exploration of unseen scenarios, while AU quantifies the inherent environmental risk.
During deployment, EU functions as a reliability indicator: if the EU associated
with the actor's action exceeds a threshold, the actor is treated as
epistemically unreliable and control is handed over to a conservative
mathematical programming-based fallback strategy.

The overall procedure is shown in Algorithm~\ref{alg:uq_lagrangian_actor_critic}.

\begin{algorithm}[!t]
	\caption{Lagrangian Distributional Actor--Critic With Second-Order UQ and Fallback}
	\label{alg:uq_lagrangian_actor_critic}
	\begin{algorithmic}[1]
		\STATE \textbf{Input:} replay buffer $\mathcal D$, discount factor $\gamma$, LM stepsize $\eta_\lambda$, tolerance $d$, ensemble size $M$, dropout samples $K$, missed-detection tolerance $\epsilon_{\mathrm{miss}}$.
		\STATE Initialize actor $\pi_\phi$, target actor $\pi_{\bar\phi}$, Bayesian distributional critics $\{Z_{\psi_m}\}_{m=1}^{M}$, target critics $\{Z_{\bar\psi_m}\}_{m=1}^{M}$, and Lagrange multiplier $\lambda\ge0$.
		
		\FOR{each training episode}
		\STATE Observe initial state $s_0$.
		\FOR{$t=0,1,\ldots,T-1$}
		\STATE Construct the candidate action set $\mathcal A_{\mathrm{cand}}(s_t)$.
		\STATE For each $a\in\mathcal A_{\mathrm{cand}}(s_t)$, estimate the critic mean return and $\mathrm{EU}_{\mathrm{crit}}(s_t,a)$ using \eqref{eq:q_critic}--\eqref{eq:eu_critic}.
		\STATE Select the exploratory action $a_t$ according to \eqref{eq:ucb_exploration}.
		\STATE Execute $a_t$, observe $s_{t+1}$, $R(s_t,a_t)$, and $C(s_t,a_t)$.
		\STATE Compute $r_t^\lambda$ by \eqref{eq:lagrangian_reward} and store the transition in $\mathcal D$.
		\STATE Sample a mini-batch $\mathcal B$ from $\mathcal D$.
		\STATE Update the Bayesian distributional critics by minimizing \eqref{eq:bayesian_critic_loss_actorcritic}.
		\STATE Update the actor by maximizing \eqref{eq:actor_obj}.
		\STATE Update the Lagrange multiplier.
		\STATE Update target networks.
		\ENDFOR
		\ENDFOR
		
	\STATE \textbf{Offline threshold calibration:}
	\STATE Construct $\mathcal D_{\mathrm{cal}}\subset\mathcal D_{\mathrm{hold}}$.
	\STATE Form $\mathcal D_{\mathrm{cal}}^{\mathrm{crit}}$ by retaining states with $\Delta C(s)>\eta_C$ or $\Delta R(s)>\eta_R$.
	\STATE Compute $\mathrm{EU}_{\mathrm{crit}}(s,\pi_\phi(s))$ for all $s\in\mathcal D_{\mathrm{cal}}^{\mathrm{crit}}$.
	\STATE Select $\tau_{\mathrm{fb}}$ by the calibration rule in \eqref{eq:safety_threshold}.
		
		\STATE \textbf{Online deployment:}
		\FOR{each deployment state $s_t$}
		\STATE Compute $a_{\mathrm{actor}}=\pi_\phi(s_t)$ and $\mathrm{EU}_{\mathrm{crit}}(s_t,a_{\mathrm{actor}})$.
		\STATE Select the final control action by the fallback rule in \eqref{eq:fallback_rule}.
		\ENDFOR
	\end{algorithmic}
\end{algorithm}

\subsection{Distributional Actor--Critic Backbone}

We adopt a distributional actor--critic architecture as the
decision-making backbone. It operates on the Lagrangian MDP induced by
the CMDP in Section~\ref{sec:DN_formulation}, with scalar constraint cost
\(C(s,a)\) and Lagrangian reward \(R_\lambda(s,a)\) defined in
\eqref{eq:lagrangian_reward}. The policy is parameterized by a
deterministic actor \(\pi_\phi : \mathcal{S} \to \mathcal{A}\), and the
value function is represented by a distributional critic that
approximates the return distribution \(Z^\pi\) in
\eqref{eq:dist_bellman}.
The critic follows the IQN formulation \cite{ref22}. Given a
state--action pair \((s,a)\) and a sampled quantile fraction
\(\tau \in (0,1)\), the critic \(Z_\psi(s,a,\tau)\) outputs an
approximation of the corresponding quantile of the discounted return
under \(R_\lambda\). Let \((s_t,a_t,r_t^\lambda,s_{t+1})\) be a
transition sampled from the replay buffer, where
\(r_t^\lambda = R_\lambda(s_t,a_t)\), and let
\(\tau,\tau' \sim \mathcal{U}(0,1)\) be independent quantile samples.
The distributional Bellman target is
\begin{equation}
	\hat z_t
	=
	r_t^\lambda
	+
	\gamma \,
	Z_{\bar\psi}\bigl(s_{t+1}, \pi_{\bar\phi}(s_{t+1}), \tau'\bigr),
\end{equation}
where \(\bar\psi\) and \(\bar\phi\) denote target-network parameters.
The critic is trained by minimizing the expected quantile Huber loss
\begin{equation}
	\mathcal{L}_{\text{crit}}
	=
	\mathbb{E}
	\Bigl[
	\rho_{\tau}^{\kappa}\!\bigl(
	\hat z_t - Z_\psi(s_t,a_t,\tau)
	\bigr)
	\Bigr],
	\label{eq:iqn_loss}
\end{equation}
where \(\kappa > 0\) is the Huber threshold and
\begin{align}
	\rho_{\tau}^{\kappa}(u)
	&=
	\bigl|\tau - \mathbf{1}_{\{u<0\}}\bigr|
	L_{\kappa}(u),
	\\
	L_{\kappa}(u)
	&=
	\begin{cases}
		\frac{1}{2} u^{2}, & |u| \le \kappa,\\[0.2em]
		\kappa\bigl(|u| - \frac{1}{2}\kappa\bigr), & |u| > \kappa,
	\end{cases}
\end{align}
is the standard smoothed asymmetric quantile regression loss. 
The actor is updated by maximizing the expected Lagrangian return
estimated by the distributional critic,
\begin{equation}
	J_{\text{act}}(\phi)
	=
	\mathbb{E}_{s \sim \mathcal{D}}
	\Bigl[
	\mathbb{E}_{\tau}
	\bigl[
	Z_\psi\bigl(s, \pi_\phi(s), \tau\bigr)
	\bigr]
	\Bigr],
	\label{eq:actor_obj}
\end{equation}
where \(\mathcal{D}\) is the state distribution induced by the replay
buffer. The policy parameters \(\phi\) are updated by gradient ascent on
\(J_{\text{act}}(\phi)\), while the Lagrange multiplier \(\lambda\) is
updated online using stochastic gradient ascent on the dual objective
based on sampled constraint costs \(C(s_t,a_t)\).
Even without the UQ introduced below, this backbone already yields a
standard constrained distributional RL scheme for DN operation under the
CMDP model in Section~\ref{sec:DN_formulation}.

 \subsection{Bayesian Distributional Critic}
 \label{subsec:bayesian_critic}
 
 Starting from the deterministic distributional actor--critic backbone in
 Section~\ref{sec:framework}, we now endow the
 distributional critic with Bayesian structure. We treat the
 critic networks as approximate Bayesian neural networks and obtain
 stochastic predictions via a combination of Monte Carlo dropout
 \cite{ref24} and deep ensembles \cite{ref25}. 
 Monte Carlo dropout perturbs a single network and yields an essentially
 unimodal approximate posterior around one mode, whereas deep ensembles
 maintain multiple independently trained networks and thus approximate a
 multi-modal posterior over value and policy functions. 
 More specifically, we maintain an ensemble of $M$ IQN-based distributional critics
 $\{Z_{\psi_m}\}_{m=1}^{M}$ with parameters $\psi_m$. Each critic
 approximates the quantile function of the Lagrangian return distribution
 under the current policy. Given a state--action pair $(s,a)$,
	 a quantile fraction $\tau \in (0,1)$, and a dropout mask
	 $\xi_{m,k}$, the stochastic critic output is
	\begin{equation}
		\begin{aligned}
			z_{m,k}(s,a,\tau)
			&= Z_{\psi_m}\bigl(s,a,\tau;\xi_{m,k}\bigr), \\
			&\quad m = 1,\dots,M,\; k = 1,\dots,K,
		\end{aligned}
	\end{equation}
	 where dropout masks $\xi_{m,k}$ are sampled i.i.d.\ as in
	 \cite{ref24}. For fixed $(s,a)$ and critic member $m$, the mapping
	 $\tau \mapsto z_{m,k}(s,a,\tau)$ defines a stochastic approximation of
	 the return quantile function under the $k$-th dropout realization.

 Each critic member is trained with the IQN loss
 \eqref{eq:iqn_loss}, using the actor action
 $ a(s_{t+1})$ as input to the
 target network. Specifically, for a transition
$(s_t,a_t,r_t^\lambda,s_{t+1})$ in the replay buffer, the Bellman target
 for critic $m$ under dropout mask $\xi_{m,k}$ is
	 \begin{equation}
	 	\hat z_{t,m,k}
	 	=
	 	r_t^\lambda
	 	+
 	\gamma \,
	 	Z_{\bar\psi_m}\bigl(
	 	s_{t+1}, \bar a(s_{t+1}),
	 	\tau'; \xi_{m,k}
	 	\bigr),
	 \end{equation}
 where $\bar\psi_m$ denotes the target parameters of critic $m$ and
 $\tau' \sim \mathcal{U}(0,1)$. The critic loss for member $m$ on a
 mini-batch $\mathcal{B}$ is
\begin{equation}
	\begin{aligned}
		\mathcal{L}_{\text{crit}}^{(m)}
		&=
		\mathbb{E}_{(s_t,a_t,r_t^\lambda,s_{t+1}) \in \mathcal{B}}
		\mathbb{E}_{\tau,\tau'}
		\mathbb{E}_{k}
		 \Bigl[
		 \rho_{\tau}^{\kappa}\!\bigl(
		 \hat z_{t,m,k}
		 -
		 \\
		 &\qquad
		 Z_{\psi_m}(s_t,a_t,\tau;\xi_{m,k})
		 \bigr)
		 \Bigr],
	\end{aligned}
\end{equation}
 where the expectations over $\tau,\tau'$ are with respect to the
 uniform distribution on $(0,1)$, and the expectation over $k$ averages
 over dropout realizations.
To prevent ensemble collapse and encourage meaningful epistemic spread, we use a repulsive diversity regularizer based on their mean return estimates
\(\mu_m(s,a) = \mathbb{E}_{\tau}[Z_{\psi_m}(s,a,\tau)]\):
\begin{equation}
	\mathcal{L}_{\text{div}}
	=
    \mathbb{E}_{(s,a)\in\mathcal{B}}
	\Biggl[
	\frac{2}{M(M-1)}
	\sum_{m<m'}
	\bigl(
	\mu_m(s,a) - \mu_{m'}(s,a)
	\bigr)^2
	\Biggr],  
	\label{eq:bayesian_critic_loss_div}
\end{equation}
\begin{equation}
	\mathcal{L}_{\text{Bayes}}
	=
	\frac{1}{M}
	 \sum_{m=1}^{M}
	\mathcal{L}_{\text{crit}}^{(m)}
	-
	\beta_{\text{div}} \,
	\mathcal{L}_{\text{div}}.
	\label{eq:bayesian_critic_loss_actorcritic}
\end{equation}

	 The Bayesian critic
	 module therefore produces \(B=MK\) stochastic return quantile functions,
	 which are flattened into the return-law samples \(\{P_b\}_{b=1}^{B}\) used
	 in Section~\ref{sec:uq_return_critic}.

\subsection{Distance-based Second-order UQ}
\label{sec:second_order_uq}

\subsubsection{General distance-based second-order UQ framework}
\label{sec:second_order_uq_general}
 
 We adopt the distance-based second-order UQ framework of \cite{ref27}.
 Let \(Y\) be an outcome space equipped with its Borel \(\sigma\)-algebra.
 A first-order predictive distribution is a probability measure
 \(p \in \mathcal P(Y)\).
 Higher-order uncertainty about the predictive distribution itself is
 modeled by a \emph{second-order} distribution
 \(Q \in \mathcal O(Y) := \mathcal P(\mathcal P(Y))\), namely a distribution
 over \(p \in \mathcal P(Y)\).
 To compare second-order distributions, we first equip the first-order
 space \(\mathcal P(Y)\) with a metric
 \(d_1 : \mathcal P(Y) \times \mathcal P(Y) \to [0,\infty)\).
 Given \(d_1\), we measure distances on the second-order space
 \(\mathcal O(Y)\) by the Wasserstein--\(p\) metric induced by \(d_1\),
 with \(p \ge 1\): for \(Q,Q' \in \mathcal O(Y)\),
 \begin{equation}
 	W_p(Q,Q')
 	=
 	\inf_{\gamma \in \Gamma(Q,Q')}
 	\Biggl(
 	\int_{\mathcal P(Y) \times \mathcal P(Y)}
 	d_1^p(p,\tilde p)
 	\,\mathrm d \gamma(p,\tilde p)
 	\Biggr)^{1/p},
 	\label{eq:w2_second_order}
 \end{equation}  
 where \(\Gamma(Q,Q')\) is the set of all couplings between \(Q\) and \(Q'\).
 Intuitively, \(W_p(Q,Q')\) is the minimal transport cost needed to move
 mass from \(Q\) to \(Q'\) when transporting one unit of mass from
 \(p\) to \(\tilde p\) costs \(d_1(p,\tilde p)\).
 The distance-based construction in \cite{ref27} quantifies total,
 aleatoric, and epistemic uncertainty by measuring the distance of a given
 \(Q \in \mathcal O(Y)\) to three canonical \emph{reference families}.
 Crucially, these families are chosen to represent \emph{least-uncertain}
 states for the corresponding uncertainty type.
 
 \paragraph*{Reference family for total uncertainty}
 Total certainty corresponds to knowing both the outcome and the predictive
 distribution exactly. This is represented by second-order Dirac masses
 at first-order Dirac measures:
 \begin{equation}
 	\mathcal S_{\mathrm{tot}}
 	=
 	\bigl\{
 	\delta_{\delta_y}
 	:
 	y \in Y
 	\bigr\}
 	\subset
 	\mathcal O(Y),
 \end{equation}
 where \(\delta_y \in \mathcal P(Y)\) is the Dirac measure at \(y\), and
 \(\delta_{\delta_y} \in \mathcal O(Y)\) is the Dirac mass at \(\delta_y\).
 Elements of \(\mathcal S_{\mathrm{tot}}\) carry neither first-order nor
 second-order uncertainty.
 
 \paragraph*{Reference family for aleatoric uncertainty (zero-aleatoric states)}
 To isolate aleatoric uncertainty, \cite{ref27} uses as reference the set
 of second-order distributions supported on \emph{deterministic} predictors
 (first-order Dirac measures). Concretely, for any \(m \in \mathcal P(Y)\),
 define
 \begin{equation}
 	\delta_m
 	=
 	\int_Y \delta_{\delta_y}\,\mathrm d m(y),
 \end{equation}
 and set
 \begin{equation}
 	\mathcal S_{\mathrm{al}}
 	=
 	\Bigl\{
 	\delta_m
 	:
 	m \in \mathcal P(Y)
 	\Bigr\}
 	\subset
 	\mathcal O(Y).
 \end{equation}
 Every \(Q' \in \mathcal S_{\mathrm{al}}\) assigns probability one to
 first-order Dirac measures, hence it has \emph{no aleatoric (first-order)}
 uncertainty. Measuring the distance of a general \(Q\) to
 \(\mathcal S_{\mathrm{al}}\) therefore quantifies how far \(Q\) is from
 having deterministic first-order predictors.
 
 \paragraph*{Reference family for epistemic uncertainty (zero-epistemic states)}
 To remove epistemic uncertainty, the reference family is the set of
 second-order Dirac measures on arbitrary first-order distributions:
 \begin{equation}
 	\mathcal S_{\mathrm{ep}}
 	=
 	\bigl\{
 	\delta_p
 	:
 	p \in \mathcal P(Y)
 	\bigr\}
 	\subset
 	\mathcal O(Y).
 \end{equation}
 An element \(\delta_p\) fixes the predictive distribution \(p\) and thus
 eliminates second-order randomness, meaning it has \emph{no epistemic
 	(second-order)} uncertainty, while \(p\) itself may remain diffuse.
 
 Given these reference families, the distance-based indices are defined as
 \begin{align}
 	U_{\mathrm{tot}}(Q)
 	&=
 	\inf_{Q' \in \mathcal S_{\mathrm{tot}}}
 	W_p\bigl(Q,Q'\bigr),
 	\label{eq:utot_def}
 	\\
 	U_{\mathrm{al}}(Q)
 	&=
 	\inf_{Q' \in \mathcal S_{\mathrm{al}}}
 	W_p\bigl(Q,Q'\bigr),
 	\label{eq:ual_def}
 	\\
 	U_{\mathrm{ep}}(Q)
 	&=
 	\inf_{Q' \in \mathcal S_{\mathrm{ep}}}
 	W_p\bigl(Q,Q'\bigr).
 	\label{eq:uep_def}
 \end{align}
 Thus, \(U_{\mathrm{tot}}(Q)\) measures the distance to complete certainty,
 \(U_{\mathrm{al}}(Q)\) measures the distance to the family of
 zero-aleatoric reference states, and \(U_{\mathrm{ep}}(Q)\) measures the
 distance to the family of zero-epistemic reference states.
 For suitable choices of \(d_1\), these indices satisfy the axioms proposed
 in \cite{ref27}, including invariance and a consistent separation of
 first-order (aleatoric) and second-order (epistemic) effects.

\subsubsection{Return-based second-order UQ for the critic}
\label{sec:uq_return_critic}

We now instantiate the distance-based framework of
Section~\ref{sec:second_order_uq_general} on the one-dimensional return
space and use it to quantify second-order uncertainty for the critic. 

\paragraph*{Outcome space and metrics on return laws}

Let \(Y \subset \mathbb R\) denote the return space, and let
\(\mathcal P(Y)\) be the corresponding space of return laws.
We use \(W_2\) for the 2-Wasserstein distance between return laws, induced by
the squared Euclidean cost on \(Y\); the induced second-order construction is
detailed in Appendix~\ref{app:uq_proofs}.
In the one-dimensional setting, the squared first-order 2-Wasserstein
distance on \(\mathcal P(Y)\) admits the quantile representation
\begin{equation}
	\bigl(W_2(p,\tilde p)\bigr)^2
	=
	\int_0^1
	\bigl[
	Q_p(\tau) - Q_{\tilde p}(\tau)
	\bigr]^2
	\,\mathrm d\tau,
	\label{eq:w2_quantile}
\end{equation}
where \(Q_p\) and \(Q_{\tilde p}\) are the quantile functions of
\(p,\tilde p \in \mathcal P(Y)\), see Lemma~\ref{lem:w2_quantile} in Appendix~\ref{app:uq_proofs}. The IQN critic provides direct access
to approximate quantiles, which makes \eqref{eq:w2_quantile} convenient
for numerical evaluation. 
In what follows, we work with the squared first-order Wasserstein cost \(W_2^2\);
 since the mapping
\(r \mapsto r^2\) is monotone on \([0,\infty)\), this corresponds to a
monotone transformation of the indices in \eqref{eq:utot_def}–\eqref{eq:uep_def}
without changing their ordering. Detailed derivations are given in
Appendix~\ref{app:uq_proofs}.
  
\paragraph*{Second-order UQ for the distributional critic}

For each state--action pair \((s,a)\), the Bayesian distributional critic
(with ensembles and Monte Carlo dropout) yields \(B\) stochastic predictive
return laws. For compactness, we write
\(P_b:=p_{\mathrm{crit}}^{(b)}(\cdot|s,a)\), \(b=1,\dots,B\), and
\(P_\star:=p_{\mathrm{crit}}^\star(\cdot|s,a)\), omitting the dependence on
\((s,a)\) when no ambiguity arises. Here, \(B\) is the total number of
stochastic forward passes.
These draws induce the empirical second-order measure
\begin{equation}
	\widehat Q_B(s,a)
	=
	\frac{1}{B}
	\sum_{b=1}^{B}
	\delta_{P_b},
	\label{eq:q_critic}
\end{equation}
which summarizes higher-order uncertainty over return distributions at \((s,a)\).
For interpretation, we may view \(\{P_b\}_{b=1}^B\)
as i.i.d.\ samples from an underlying population law \(Q_{\mathrm{crit}}(s,a)\).
We choose the ground metric on \(\mathcal P(Y)\) as \(d_1 = W_2\).
Under this choice, critic epistemic uncertainty corresponds to the Fr\'echet dispersion
of \(\widehat Q_B(s,a)\) in \((\mathcal P(Y),W_2)\). Let
\begin{equation}
	P_\star
	\in
	\argmin_{p \in \mathcal P(Y)}
	\frac{1}{B}
	\sum_{b=1}^{B}
	\bigl(W_2(P_b,p)\bigr)^2.
	\label{eq:critic_barycenter}
\end{equation}
be a Wasserstein barycenter of the \(B\) return laws. In one dimension it has a closed form
via quantile averaging: if \(F_b^{-1}(\tau\mid s,a)\) is the generalized quantile of
\(P_b\), then
\(F_\star^{-1}(\tau\mid s,a)=\frac{1}{B}\sum_{b=1}^B F_b^{-1}(\tau\mid s,a)\) for all
\(\tau\in(0,1)\); see Lemma~\ref{lem:1d_barycenter_quantile} in Appendix~\ref{app:uq_proofs}.
We then define the critic epistemic and aleatoric indices as
\begin{equation}
	\mathrm{EU}_{\mathrm{crit}}(s,a)
	:=
	\frac{1}{B}
	\sum_{b=1}^{B}
	\bigl(W_2(P_b,P_\star)\bigr)^2,
	\label{eq:eu_critic}
\end{equation}
\begin{equation}
	\mathrm{AU}_{\mathrm{crit}}(s,a)
	:=
	\frac{1}{B}
	\sum_{b=1}^{B}
	\mathrm{Var}_{Y \sim P_b}(Y),
	\label{eq:au_critic}
\end{equation}
where \(\mathrm{EU}_{\mathrm{crit}}\) measures disagreement across stochastic return laws
around their barycenter, and \(\mathrm{AU}_{\mathrm{crit}}\) averages the intrinsic variance
within each return law.
 Appendix~\ref{app:uq_proofs} shows that
\eqref{eq:critic_barycenter}--\eqref{eq:au_critic} match the general distance-based
definitions of Section~\ref{sec:second_order_uq_general} under the chosen metrics, and
Appendix~\ref{app:eu_au_consistency} establishes almost-sure consistency of the Monte Carlo
estimators in \eqref{eq:eu_critic}--\eqref{eq:au_critic}.

 \subsection{Uncertainty-aware Exploration}
 \label{sec:uq_training}
 
We now describe how EU is used during training. 
At each training state \(s_t\), we form a finite feasible candidate action set 
\(\mathcal A_{\mathrm{cand}}(s_t)\) by assigning admissible discrete choices to individual discrete devices and sampling bounded local perturbations around the actor output for continuous controls.
The unperturbed actor action is also included in \(\mathcal A_{\mathrm{cand}}(s_t)\).
 For each candidate action \(a \in \mathcal A_{\mathrm{cand}}(s_t)\), we evaluate both the critic mean return and the critic-side epistemic uncertainty.
 The exploratory behavior action is selected according to the upper confidence rule
 \begin{equation}
 	a_t^{\mathrm{expl}}
 	\in
 	\argmax_{a \in \mathcal A_{\mathrm{cand}}(s_t)}
 	\mathbb E_{\tau}
 	\bigl[
 	Z_{\psi}(s_t,a,\tau)
 	\bigr]
 	+
 	\alpha_{\mathrm{expl}}
 	\mathrm{EU}_{\mathrm{crit}}(s_t,a),
 	\label{eq:ucb_exploration}
 \end{equation}
 where \(\alpha_{\mathrm{expl}}>0\) controls the relative strength of the epistemic bonus.
 To balance exploitation and uncertainty-driven exploration,
 \(\alpha_{\mathrm{expl}}\) is adapted according to the running magnitudes of the two terms in \eqref{eq:ucb_exploration}.
 Specifically, during training we maintain exponential moving estimates of the absolute critic-return term and the raw EU term over sampled candidate actions, and update \(\alpha_{\mathrm{expl}}\) so that the averaged bonus
 \(\alpha_{\mathrm{expl}}\mathrm{EU}_{\mathrm{crit}}(s_t,a)\)
 occupies a prescribed fraction of the critic-return magnitude.
 This target fraction is gradually reduced as training proceeds, encouraging broader exploration in the early stage and more exploitation-oriented behavior after the critic ensemble becomes better calibrated.
 The epistemic term therefore acts as an intrinsic bonus \cite{ref33} that directs exploration toward insufficiently learned state-action regions.

\subsection{Uncertainty-guided fallback control}
\label{sec:fallback}

To safeguard deployment under distributional shift, we equip the agent with an
uncertainty-guided fallback mechanism. The fallback controller is a single-step mixed-integer
second-order cone programming (MISOCP)-based optimal power flow, which solves a
convex relaxation of the one-period DN operation problem with the same network
and device constraints as in Section~\ref{sec:DN_formulation}, e.g.\ based on
the branch-flow model and its convexification~\cite{ref34}. 
The DG constraints are handled by keeping the capability limit as a second-order cone and rewriting the power-factor bound into linear inequalities $-\kappa_i P_{i,t}^{\mathrm{DG}} \le Q_{i,t}^{\mathrm{DG}} \le \kappa_i P_{i,t}^{\mathrm{DG}}$, where $\kappa_i=\tan(\phi_i^{\max})$ (and similarly for the lower bound via $\phi_i^{\min}$ when asymmetric limits are imposed).
Given the current
state $s$, this MISOCP returns a feasible action $a^{\mathrm{MISOCP}}(s)$ that
minimizes the instantaneous operating cost while enforcing the modeled single-step constraints. It serves as a conservative yet reliable baseline when the
RL policy is deemed epistemically unreliable.

\paragraph*{Offline calibration of epistemic thresholds}

We calibrate the epistemic threshold on a separate hold-out set
\(\mathcal D_{\mathrm{hold}}\) that is not used for training the deployed
policy. On this hold-out environment, we fix a reference optimal policy
\(\pi^{\star}\) and use it to define state-wise performance benchmarks.
For any state \(s\) and policy \(\pi\), let \(J_R^{\pi}(s)\) and
\(J_C^{\pi}(s)\) denote the discounted return and aggregated constraint return
starting from \(s\), as in
\eqref{eq:cmdp_return}--\eqref{eq:cmdp_constraint_return}, with \(s_0=s\) and
\(C\) the scalar constraint cost in Section~\ref{sec:DN_formulation}. For the
deployed policy \(\pi\) and the reference policy \(\pi^{\star}\), we define the
long-horizon constraint and reward gaps as
\begin{align}
	\Delta C(s)
	&=
	\max\!\left(0,\, J_C^{\pi}(s) - J_C^{\star}(s)\right),
	\\
	\Delta R(s)
	&=
	\max\!\left(0,\, J_R^{\star}(s) - J_R^{\pi}(s)\right).
\end{align}
Thus, \(\Delta C(s)>0\) means that the deployed policy incurs higher cumulative
constraint cost than the reference policy, while \(\Delta R(s)>0\) means that
it obtains lower cumulative reward. Given \(\eta_C,\eta_R\ge 0\), let
\(\mathcal D_{\mathrm{cal}}^{\mathrm{crit}}\) collect the hold-out calibration
states with \(\Delta C(s)>\eta_C\) or \(\Delta R(s)>\eta_R\). This set is
separate from the training data and the final evaluation episodes. For each
\(s\in\mathcal D_{\mathrm{cal}}^{\mathrm{crit}}\), we compute
\(e_s=\mathrm{EU}_{\mathrm{crit}}(s,\pi_\phi(s))\). The fallback threshold is
chosen as the empirical \(\epsilon_{\mathrm{miss}}\)-lower quantile:
\begin{equation}
	\tau_{\mathrm{fb}}
	=
	\widehat Q_{\epsilon_{\mathrm{miss}}}
	\left(
	\{e_s:s\in\mathcal D_{\mathrm{cal}}^{\mathrm{crit}}\}
	\right).
	\label{eq:safety_threshold}
\end{equation}
This rule directly controls the fraction of critical calibration states that
are covered by fallback.

\paragraph*{Online decision rule}

At deployment time, given the current state \(s_t\), the agent first computes
the actor action \(a_{\mathrm{actor}}=\pi_\phi(s_t)\) and the corresponding
epistemic score \(\mathrm{EU}_{\mathrm{crit}}(s_t,a_{\mathrm{actor}})\). The
final control action is selected by
\begin{equation}
	a_t
	=
	\begin{cases}
		a^{\mathrm{MISOCP}}(s_t),
		&
		\mathrm{EU}_{\mathrm{crit}}(s_t,a_{\mathrm{actor}})
		\ge \tau_{\mathrm{fb}},
		\\[0.3em]
		a_{\mathrm{actor}},
		&
		\mathrm{EU}_{\mathrm{crit}}(s_t,a_{\mathrm{actor}})
		< \tau_{\mathrm{fb}}.
	\end{cases}
	\label{eq:fallback_rule}
\end{equation}
States with \(\mathrm{EU}_{\mathrm{crit}}(s_t,a_{\mathrm{actor}})
\ge \tau_{\mathrm{fb}}\) are treated as epistemically unreliable and handed
over to the MISOCP-based fallback controller.


\section{Simulation Results}
\label{sec:sim}
 
\subsection{Experimental Protocol and OOD Scenario Construction}
\label{subsec:exp_protocol}

\subsubsection{Test Cases and Network Adaptation}
\label{subsubsec:test_cases}

We organize the numerical study around two test cases. 
\textbf{Test Case I} is based on the MV Oberrhein network in pandapower~\cite{ref35}, a
realistic 20~kV medium-voltage distribution network supplied by two
substations. The original static profiles are converted into a 24-hour
operation task by using UCI load curves~\cite{ref37} while preserving the
original load power factors, and Kaggle PV generation curves~\cite{ref36} for
renewable injections. The electricity price during 8{:}00--21{:}00 is set to
1.558 times the off-peak price. We add 10 SCBs, each with four
0.12~MVAR steps, and 10 ESSs, each with 2~MWh capacity and 0.5~MW maximum
charging/discharging power. Among the 153 static generators in the original
network, one third are treated as PV units and the rest as diesel DGs with a
minimum power factor of 0.7. Bus voltages are constrained within
0.95--1.05~p.u., and the two OLTC transformers are modeled with tap positions
from \(-9\) to \(9\) and a 1.5\% tap step.  
\textbf{Test Case II} is adapted from the IEEE European low-voltage network in
pandapower~\cite{ref35}.
The original benchmark is a 0.416~kV radial feeder with 907 buses and 905 lines. 
We convert this benchmark into a balanced three-phase
operation case by assigning equal per-phase load and generation profiles at
each active bus.
We extend the transformer
to a controllable tap-changing transformer.
We add 51 PV units. 
The action space includes 102 dispatchable DGs, 13 ESSs, 10 SCBs, and 1 OLTC
transformer. These devices form
a 286-dimensional action space.

To check whether the constraint-compliance behavior of the proposed method is
sensitive to the scalarization of network-security violations, we conduct a
penalty-ratio ablation in Test Case II. The default setting uses
\(w_V:w_L=1:1\), with \(w_V=w_L=10^3\), while power-flow non-convergence is
treated as a hard failure and assigned a much larger penalty,
\(w_{\mathrm{pf}}=10^6\). For the ablation, the voltage/loading penalties are
kept at the same overall scale and only their relative ratio is varied, namely
\(w_V:w_L\in\{0.75:1,1:1,2:1\}\).

\begin{table}[!ht]
	\centering
	\caption{Sensitivity to the relative voltage/loading penalty weights.}
	\label{tab:penalty_ratio_sensitivity}
	\footnotesize
	\setlength{\tabcolsep}{3.5pt}
	\begin{tabular}{lccc}
		\toprule
		\(w_V:w_L\) & Reward & Voltage viol. & Loading viol. \\
		\midrule
		\(0.75:1\)
		& \(-17.24 \pm 0.39\)
		& \((7.1 \pm 3.8)\!\times\!10^{-3}\)
		& \(3.2 \pm 2.8\) \\
		\(1:1\)
		& \(-17.58 \pm 0.33\)
		& \((4.6 \pm 2.7)\!\times\!10^{-3}\)
		& \(5.6 \pm 4.2\) \\
		\(2:1\)
		& \(-17.82 \pm 0.27\)
		& \((2.3 \pm 1.5)\!\times\!10^{-3}\)
		& \(9.1 \pm 7.6\) \\
		\bottomrule
	\end{tabular}
\end{table}

Table~\ref{tab:penalty_ratio_sensitivity} shows the expected tradeoff:
larger \(w_V\) reduces voltage violations, whereas relatively larger loading
penalties reduce thermal violations and yield a slightly higher reward. Across
all ratios, both violation terms remain small, indicating that the safety
behavior of the proposed method is not tied to a narrowly tuned penalty ratio.

\subsubsection{OOD Scenario Construction}
\label{subsubsec:ood_scenario_construction}
 
The OOD scenarios include four complementary families. First, observation-level
OOD cases perturb the agent input while keeping the physical exogenous
trajectories unchanged. This family includes additive Gaussian observation noise
and FDIA-style adversarial sensor manipulation. For Gaussian observation noise,
the deployed policy observes \(\tilde o_t=o_t+\epsilon_t\), where
\(\epsilon_t\sim\mathcal N(0,\sigma^2 I)\) and
\(\sigma\in\{0.5,1.0\}\). For FDIA-style attacks, only
measurement-related state components are perturbed, including bus voltages,
device telemetry, branch loading, and grid exchange. The attacked observation is written as
\(\tilde o_t=o_t+m\odot\delta_t\), where \(m\) is a binary mask selecting the
measurement-related components and \(\|\delta_t\|_{\infty}\le\varepsilon\).
We consider both white-box and black-box attack variants. In the white-box case,
the attacker has access to the learned model and uses projected gradient steps
against the critic objective, with
\(\delta_t^{k+1}=\Pi_{\|\delta\|_{\infty}\le\varepsilon}
(\delta_t^k+\eta_{\mathrm{att}}\operatorname{sign}
(\nabla_{o_t}J_{\mathrm{att}}(o_t+m\odot\delta_t^k)))\), where
\(J_{\mathrm{att}}\) is chosen to decrease the critic-estimated return,
increase the critic-estimated cost, or combine the two objectives. In the
black-box case, the attacker has no access to the model parameters and only
queries the deployed agent. The scalar query objective is defined as
\(f(\tilde o_t):=-\mathbb E_{\tau}
[Z_{\psi}(\tilde o_t,\pi(\tilde o_t),\tau)]\), and the attack direction is
estimated by the symmetric finite-difference estimator
\(\hat g_t=\frac{1}{K_q}\sum_{k=1}^{K_q}
\frac{f(\tilde o_t+\zeta u_k)-f(\tilde o_t-\zeta u_k)}
{2\zeta}u_k\), where \(u_k\sim\mathcal N(0,I)\), \(K_q\) is the number of
query probes, and \(\zeta>0\) is the probing scale. The black-box perturbation
is then updated by
\(\delta_t^{k+1}=\Pi_{\|\delta\|_{\infty}\le\varepsilon}
(\delta_t^k+\eta_{\mathrm{att}}\operatorname{sign}(m\odot\hat g_t))\).
The default FDIA budget is \(\varepsilon=0.1\), with projected steps of size
\(0.025\).
Second, synthetic profile-shift scenarios impose structured deviations on the
daily exogenous trajectories, including load surges, PV dropouts, price spikes,
low-renewable high-load periods, evening net-load ramps, renewable
overgeneration, and cloud-ramp PV attenuation. These cases emulate operationally
meaningful stress events with randomized affected windows and stress magnitudes.
Third, frequency-domain OOD cases perturb the spectral components of the load
and PV trajectories and then reconstruct the time-domain profiles. This changes
the temporal shape of the exogenous curves while keeping their daily scale
comparable to the original profiles. 
Fourth, data-driven OOD cases are built from hold-out or external real-world datasets that are not used during training, including unseen UCI load profiles~\cite{ref37}, London smart-meter household consumption data~\cite{refLCL_IEEE}, hold-out Kaggle PV curves~\cite{ref36}, and NSRDB irradiance and temperature records~\cite{refNSRDB_IEEE_URL}. 
For external load scenarios, the London smart-meter readings are first aggregated to hourly resolution to match the 24-step scheduling horizon. The aggregated profile is converted into a unit-mean temporal multiplier \(r_t^{\mathrm{ext}}=L_t^{\mathrm{ext}}/(\frac{1}{T}\sum_{\tau=1}^{T}L_{\tau}^{\mathrm{ext}})\), and then applied to the original bus-level nominal active loads as \(P_{i,t}^{\mathrm{load,ext}}=\alpha_d P_i^{\mathrm{load},0} r_t^{\mathrm{ext}}\), where \(P_i^{\mathrm{load},0}\) is the nominal active load at bus \(i\), and \(\alpha_d\) rescales the daily energy to the same operating range as the in-distribution cases. The reactive load is reconstructed using the original bus power factor, \(Q_{i,t}^{\mathrm{load,ext}}=P_{i,t}^{\mathrm{load,ext}}\tan(\arccos(\mathrm{pf}_i^0))\). 
Therefore, the feeder-specific spatial allocation and power-factor structure are inherited from the target network, while the external data only alter the temporal demand shape.
For NSRDB-based PV scenarios, the irradiance and temperature records are used to construct a bounded PV availability factor rather than a detailed inverter model. Specifically, \(\rho_t^{\mathrm{NSRDB}}=\min\{1,\max\{0,(\mathrm{GHI}_t/1000)[1+\beta_{\mathrm{pv}}(T_t-25)]\}\}\), where \(\mathrm{GHI}_t\) is the hourly global horizontal irradiance, \(T_t\) is the ambient temperature, and \(\beta_{\mathrm{pv}}=-0.004~{}^{\circ}\mathrm{C}^{-1}\). The PV active power at bus \(b\) is then \(P_{b,t}^{\mathrm{PV,ext}}=P_{b,\mathrm{rated}}^{\mathrm{PV}}\rho_t^{\mathrm{NSRDB}}\). This construction preserves the installed PV capacities and the network-side operational constraints of the target feeder, while introducing real meteorological temporal patterns. 
Finally, we include VAE-generated near-OOD
trajectories~\cite{ref38}, which are decoded from a latent model trained on
in-distribution profiles and therefore represent milder distribution shifts.

\subsection{Test Case I: Qualitative Demonstration on the Original System}
\label{subsec:testcase1}
For Test Case I, the proposed agent uses \(M=5\) Bayesian distributional
critics, \(K=4\) Monte Carlo dropout samples, and dropout rate \(\rho=0.05\). Training
uses \(\gamma=0.95\), learning rate \(10^{-4}\), buffer size \(2000\),
warm-up length \(1000\), \(\alpha_{\mathrm{expl}}=0.2\), and Lagrange
multiplier stepsize \(10^{-3}\).

\subsubsection{AU Evaluation}
Fig.~\ref{fig:au_testcase1} shows the AU estimates in Test Case I under
different environmental noise scales. The estimated AU generally increases as
the noise scale grows. 

\begin{figure}[!htbp]
	\centering
	\includegraphics[width=0.95\columnwidth]{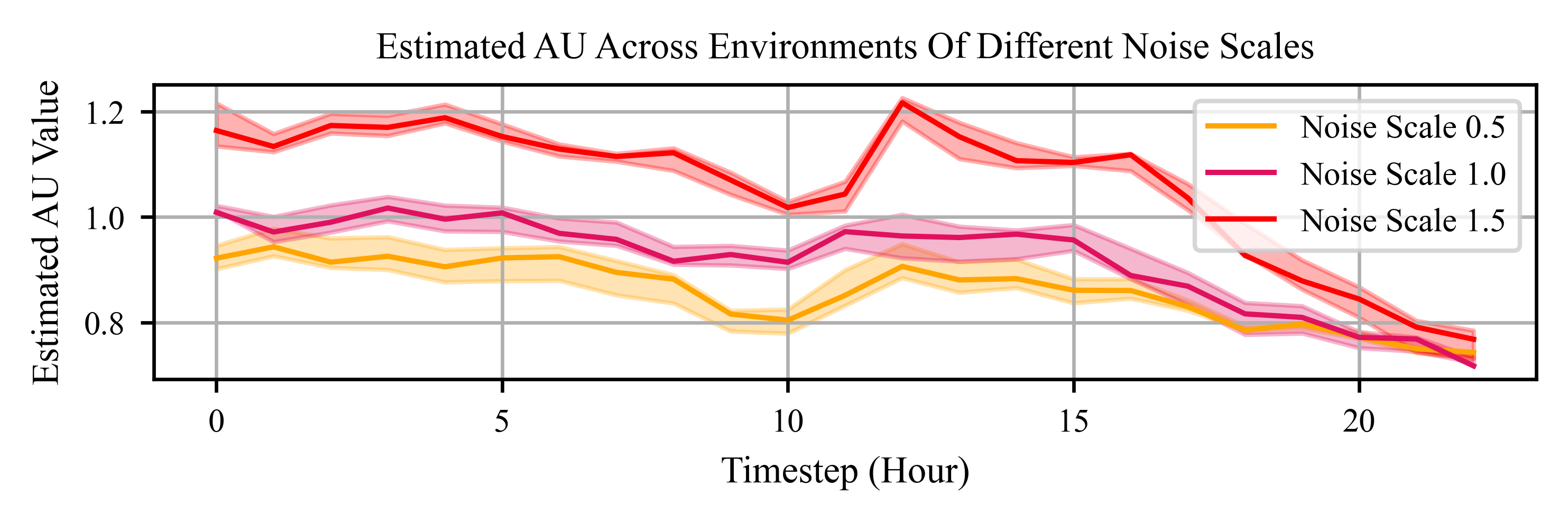}
	\caption{Estimated AU under different environmental noise scales in Test
		Case I. Solid curves show the mean and shaded areas denote one standard
		deviation. This confirms that the
		proposed AU estimate is sensitive to irreducible environmental stochasticity.}
	\label{fig:au_testcase1}
\end{figure}

\subsubsection{EU Evaluation}

Fig.~\ref{fig:eu_testcase1} reports the estimated EU in Test Case I under
different OOD settings. Compared with the InD baseline, all OOD cases produce
clearly higher EU values. 
\begin{figure}[!htbp]
	\centering
	\includegraphics[width= \columnwidth]{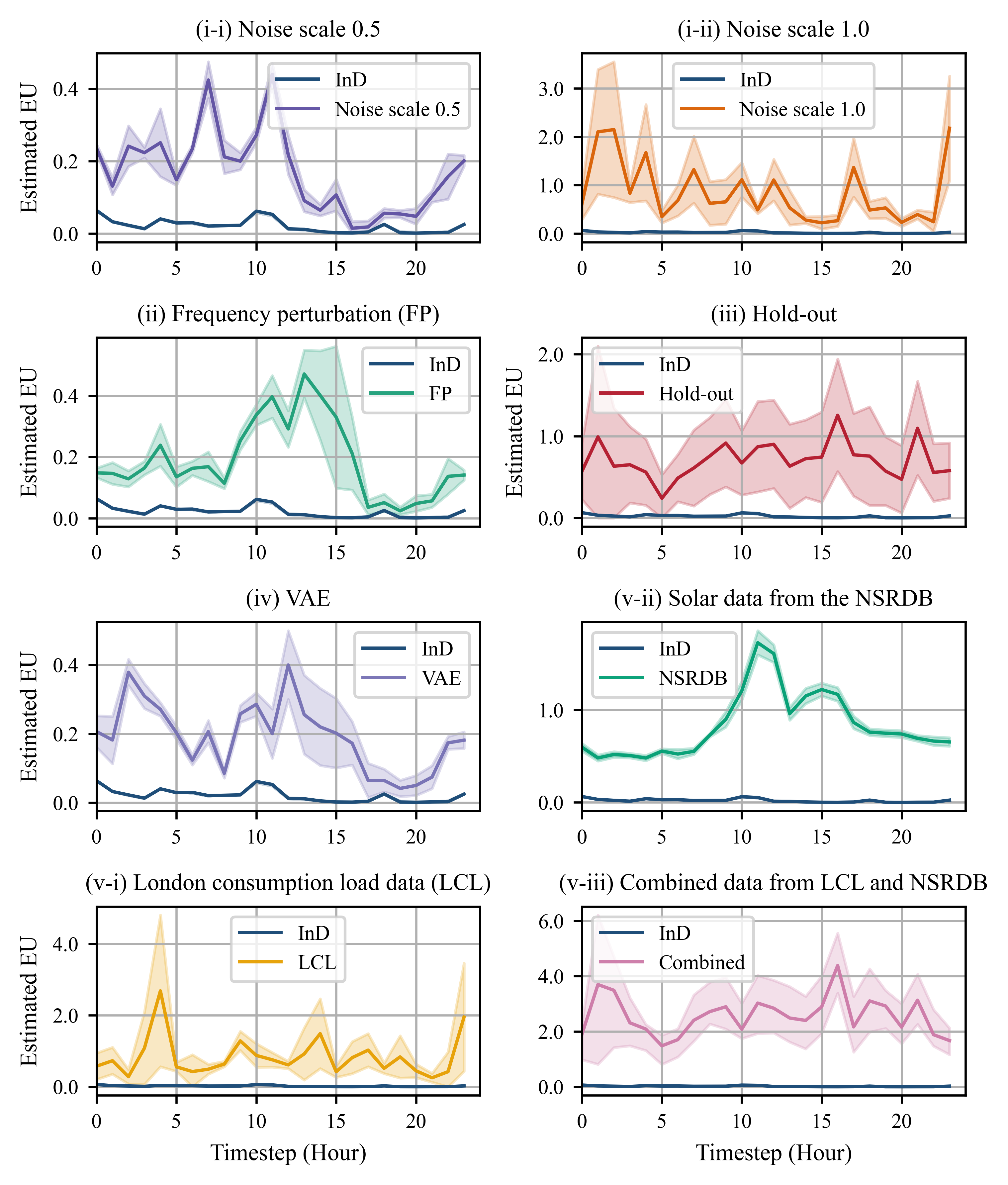}
	\caption{Estimated EU under different OOD scenarios in Test Case I. The InD
		curve is shown as the reference baseline. Solid curves show the mean and
		shaded areas denote one standard deviation. 
		The increase is mild for near-OOD VAE samples and
		frequency perturbations, but becomes much stronger under larger observation
		noise, real-world LCL load data, NSRDB solar data, and their combined shift.
		This shows that the proposed EU estimate can distinguish unfamiliar
		operating conditions from normal InD operation. }
	\label{fig:eu_testcase1}
\end{figure}
Fig.~\ref{fig:eu_attack_testcase1} reports the EU response to FDIA-style sensor
manipulation in Test Case I. As the attack steps increase, EU generally rises
across the 24-hour horizon, especially under the white-box setting. Black-box
attacks also produce increasing EU in several timesteps, although with a
different temporal pattern.  
\begin{figure}[bt]
	\centering
	\includegraphics[width=0.95\columnwidth]{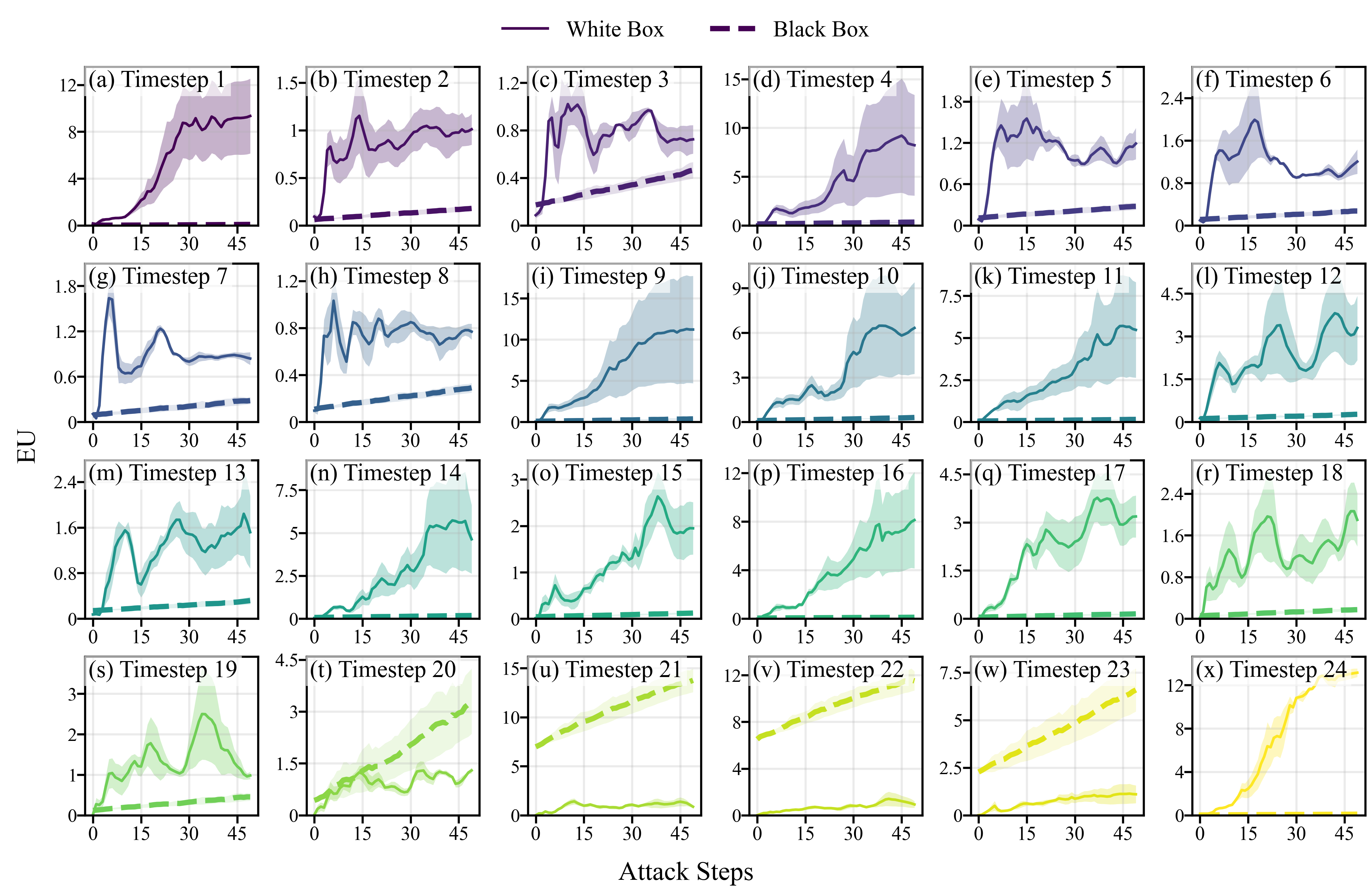}
	\caption{EU response to FDIA-style sensor attacks in Test Case I. This shows that critic-side EU can serve as a
		sensitivity indicator for adversarial observation shifts. 
		Solid
		curves denote white-box attacks, dashed curves denote black-box attacks, and
		shaded areas denote one standard deviation.}
	\label{fig:eu_attack_testcase1}
\end{figure}

\subsection{Test Case II: Quantitative Benchmark Study}
\label{subsec:testcase2}
 
\subsubsection{AU Evaluation}
To evaluate AU estimation in Test Case II, we use the same set of trained
critic parameters and change only the AU estimation method. Specifically, the
evaluation uses five independently trained models, each with six ensemble
members. The proposed AU estimator is compared with an IQN-style member
estimator, where each ensemble member is treated as an independent IQN critic
and its return variance is used as AU. 
As shown in Fig.~\ref{fig:au2}, both estimators increase monotonically with
the environmental noise scale.   
\begin{figure}[!htbp]
	\centering
	\includegraphics[width=\columnwidth]{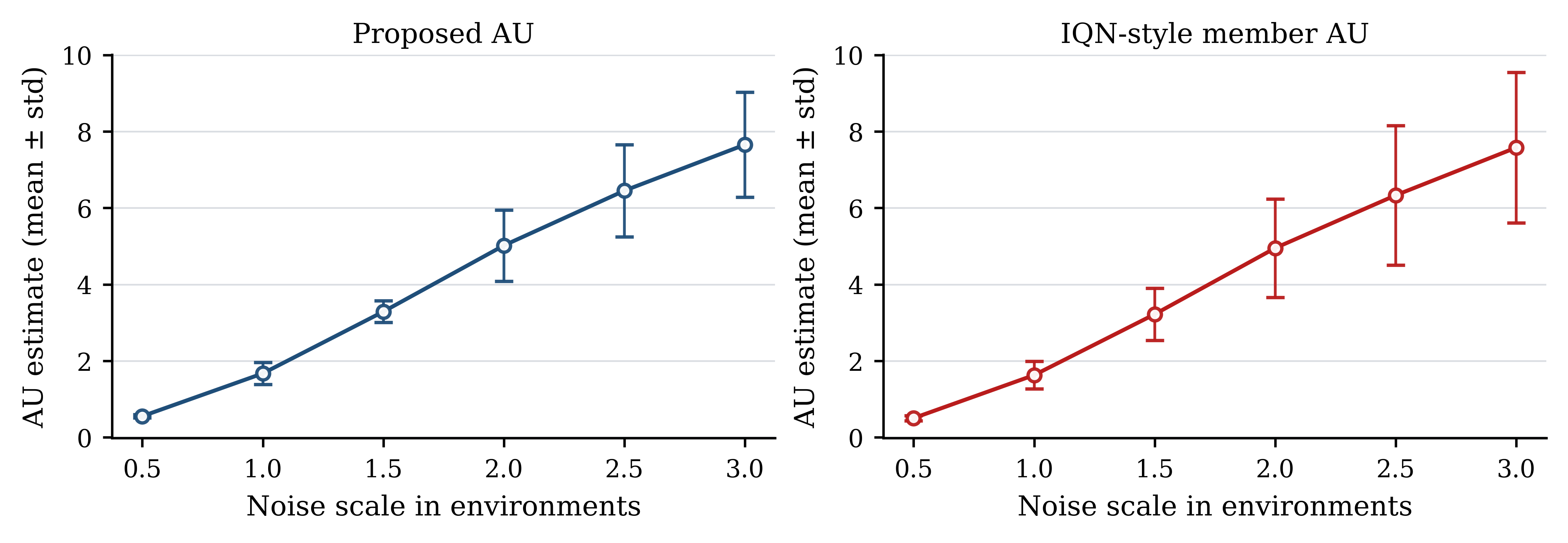}
	\caption{AU evaluation under different environmental noise
		scales. 
		Both methods successfully track the increase of intrinsic stochasticity.  }
	\label{fig:au2}
\end{figure}  
This confirms that
the proposed AU can track the increase of irreducible environmental
stochasticity.
Fig.~\ref{fig:au3} further evaluates the estimator variance of AU,
rather than the AU magnitude itself. 
\begin{figure}[!htbp]
	\centering
	\includegraphics[width=0.8\columnwidth]{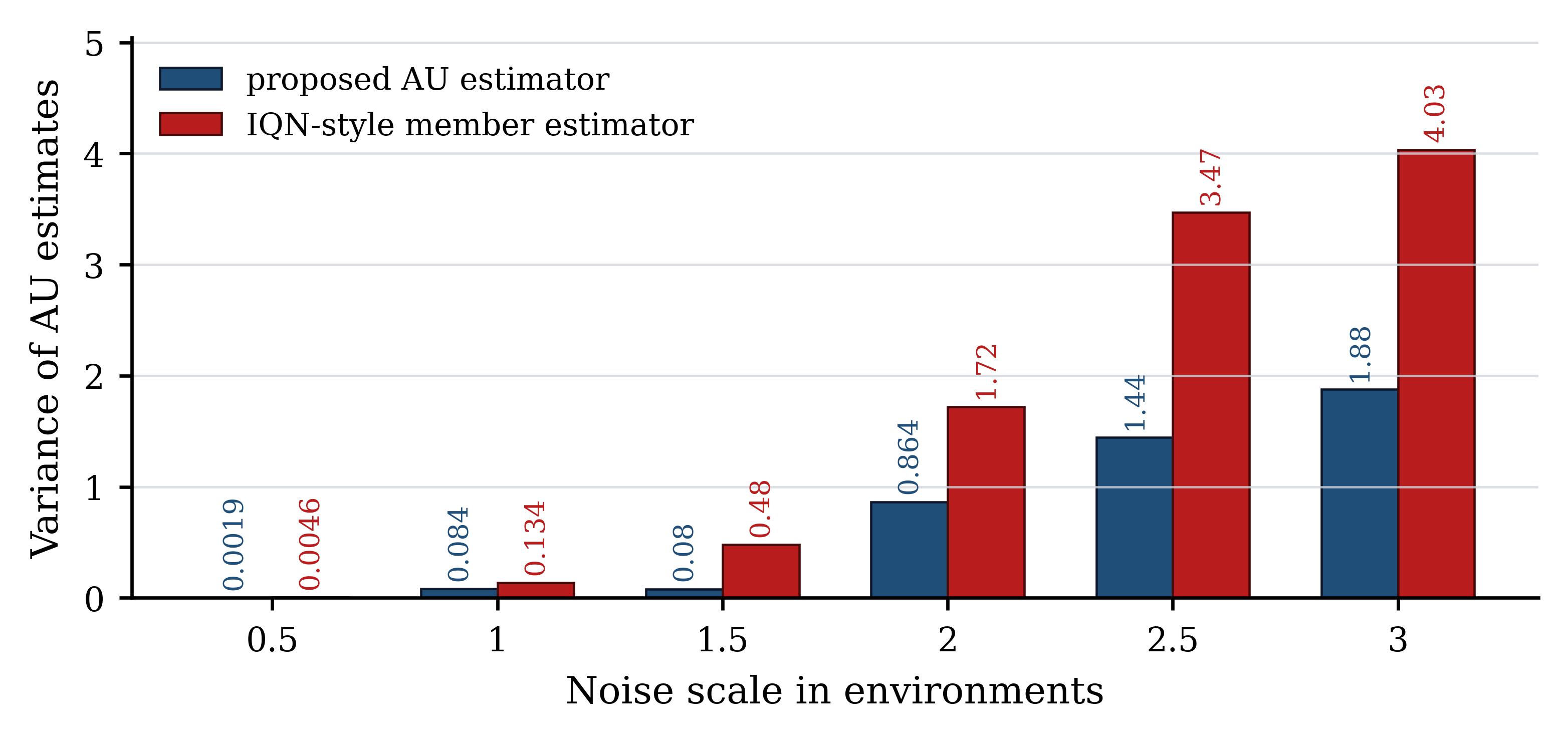}
	\caption{Estimator variance of AU under different
		environmental noise scales. The variance is computed over episode-level AU
		estimates under the same noise scale. The proposed AU estimator shows lower
		estimator variance than the IQN-style baseline.  }
	\label{fig:au3}
\end{figure} 
For each environmental noise scale
\(\sigma_{\mathrm{au}}\), we first compute one episode-level AU estimate for
each rollout, denoted by
\(\widehat{\mathrm{AU}}_e(\sigma_{\mathrm{au}})\). We then report
\(\operatorname{Var}_{e:\sigma_e=\sigma_{\mathrm{au}}}
[\widehat{\mathrm{AU}}_e(\sigma_{\mathrm{au}})]\), where the variance is taken
over all episode-level AU estimates collected under the same environmental
noise scale. Therefore, a smaller value in Fig.~\ref{fig:au3}
indicates lower fluctuation of the AU estimator across repeated rollouts, not
smaller AU of the environment. 
The proposed AU estimator consistently
shows a lower estimator variance than the IQN-style baseline. 
The advantage comes from coupling ensemble prediction with the second-order AU
definition. The ensemble members are used as Monte Carlo samples of the second-order
predictive distribution. The proposed AU then suppresses
member-specific fluctuation while retaining sensitivity to environmental
stochasticity. In contrast, the IQN-style baseline directly uses each member's
raw return variance as an AU estimate, making it more sensitive to calibration
differences, quantile-spread fluctuations, and training randomness.

\subsubsection{EU Evaluation via Training Performance}
\label{subsubsec:eu_training_performance}

We first evaluate whether the proposed EU estimate is useful during training.
Fig.~\ref{fig:eu_training} reports the evaluation reward, raw constraint cost,
and log-scale constraint cost in Test Case II.
The algorithm comparison includes TD3~\cite{refTD3}, PPO~\cite{refPPO},
PPO-GAE~\cite{refPPO,refGAE}, CPO-BDQN~\cite{refCPO,refBootDQN},
Thompson-sampling DQN~\cite{refThompsonDQN}, and IQN~\cite{ref22}.
Among these learning-based baselines, the proposed method achieves the best
final reward and the lowest long-term constraint cost. This indicates that the
performance gain is not only from distributional value learning, but also from
using critic-side EU as an exploration signal.
For the hyperparameter ablations, we adopt a two-stage protocol. First, a full
grid search is conducted in a reduced action space. 
This stage is used to select the default setting
\(\alpha_{\mathrm{expl}}=0.3\), ensemble size \(M=6\), and dropout rate
\(\rho=0.08\). After fixing this setting, we return to the full action space and
conduct single-factor sensitivity tests by varying one hyperparameter at a time.
The exploration-bonus ablation shows that too small
\(\alpha_{\mathrm{expl}}\) gives insufficient epistemic exploration, whereas too
large \(\alpha_{\mathrm{expl}}\) overemphasizes uncertain actions and slows
exploitation. The intermediate value \(\alpha_{\mathrm{expl}}=0.3\) gives the
best balance between reward improvement and cost reduction. The ensemble-size
and dropout-rate ablations show similar robustness. Moderate changes around
\(M=6\) and \(\rho=0.08\) lead to comparable convergence trends, while the
selected setting gives a favorable tradeoff between final reward, learning
stability, and constraint-cost suppression. 

\begin{figure*}[!tbp]
	\centering
	\includegraphics[width=\textwidth]{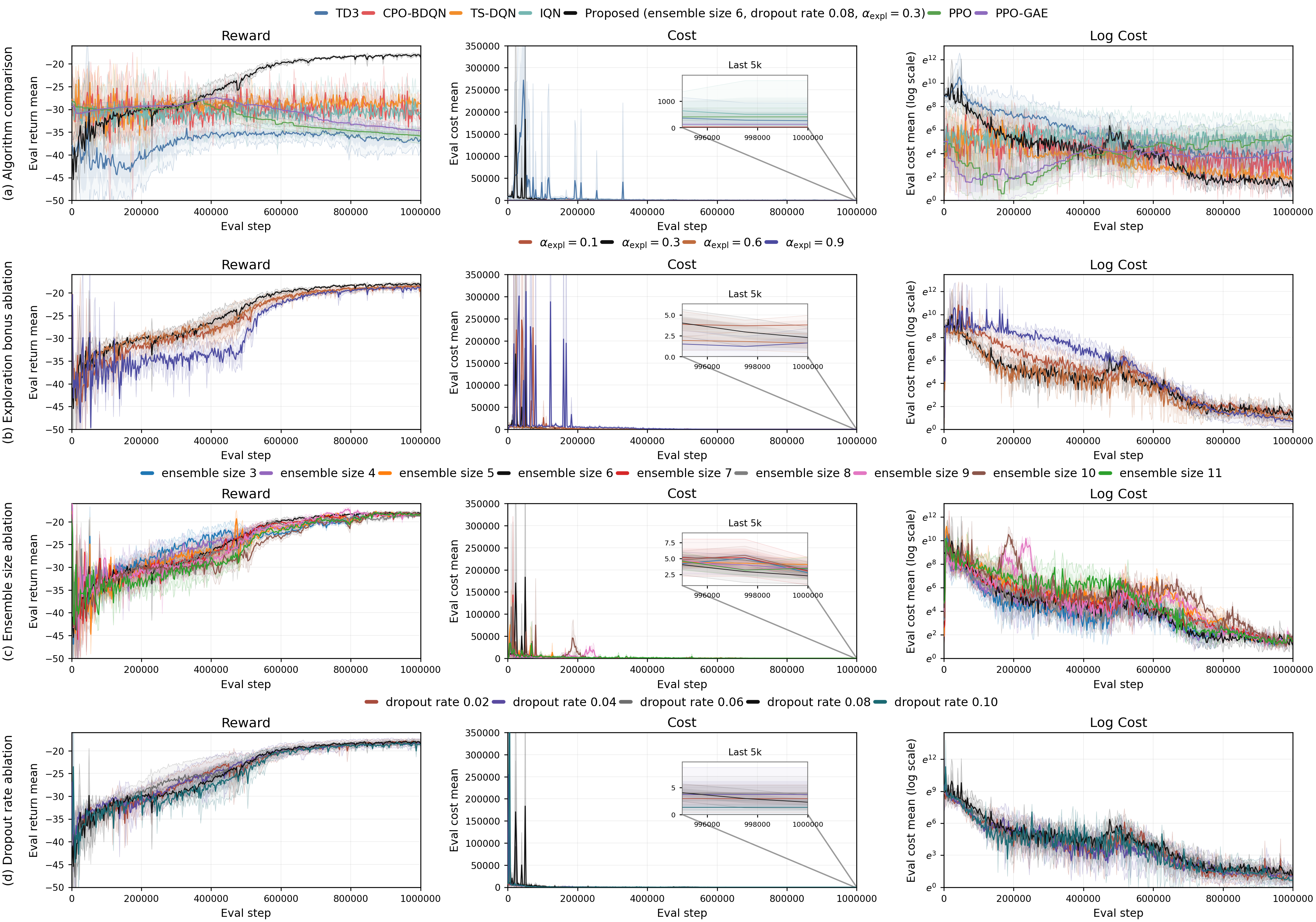}
		\caption{EU evaluation via training performance in Test Case II.
			Each row reports the evaluation reward, raw constraint cost, and log-scale
			constraint cost. Row (a) compares the proposed method with representative
			baselines. Rows (b)--(d) report full action-space sensitivity tests on the
			exploration bonus coefficient, ensemble size, and dropout rate, respectively.
			The offline MPC reference in Test Case II remains fully feasible, with zero
			constraint cost across the evaluated episodes, and yields an average reward
			of \(-16.53\) with low variance \((3.03\times10^{-2})\).}
	\label{fig:eu_training}
\end{figure*}

To examine EU exploration itself, we map each visited state during training to a
coarse bin. The number of unique bins measures
the coverage of distinct reachable operating regions. 
A bin is counted as useful
only when the associated transition is feasible, with converged AC power flow and without voltage or loading violation. The useful ratio is then the
fraction of useful unique bins among all unique bins.
As shown in Fig.~\ref{fig:eu_exploration_bins}, the proposed UQ exploration
achieves both the largest number of unique bins and the highest useful ratio. Compared with fixed Gaussian exploration noise, the proposed strategy produces
broader coverage with a higher useful-bin ratio.
\begin{figure}[htbp]
	\centering
	\includegraphics[width=\columnwidth]{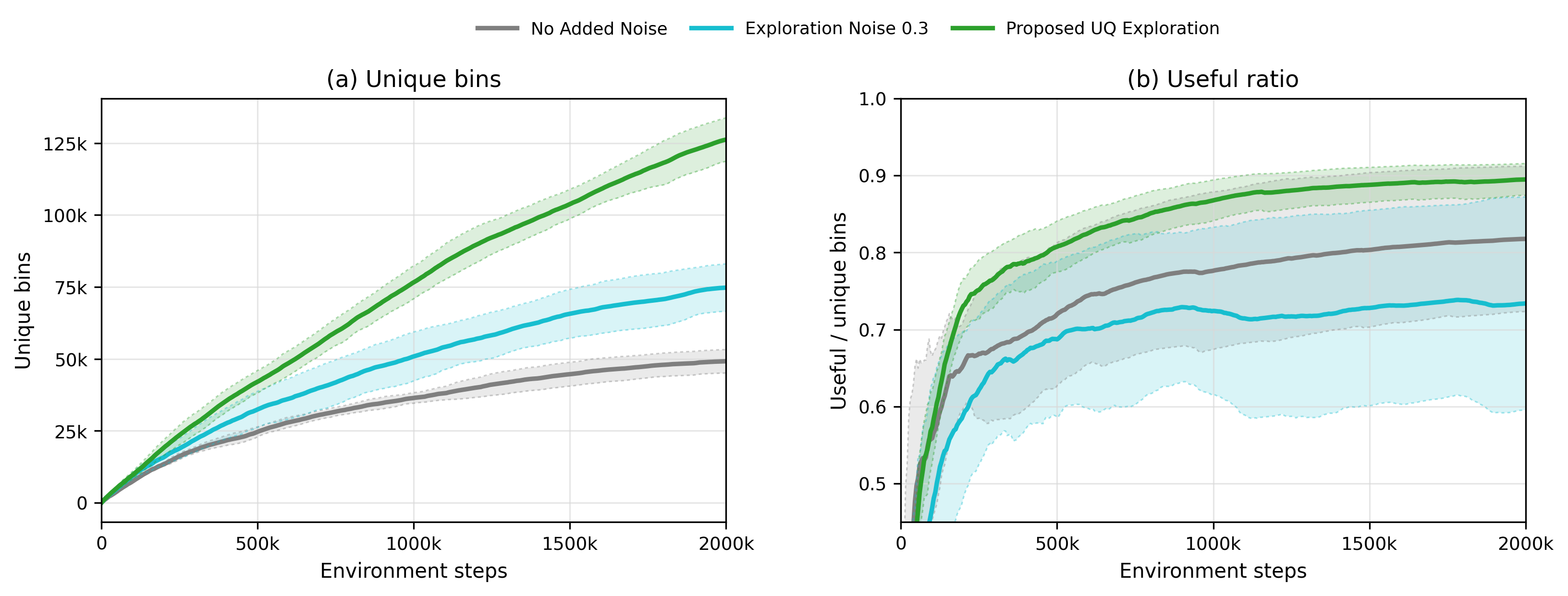}
	\caption{State-space coverage and useful-region ratio under different
		exploration strategies in Test Case II. Panel (a) reports the cumulative
		number of unique bins visited during training. Panel (b)
		reports the ratio of useful unique bins to all unique bins. Solid curves
		show the mean over repeated runs, and shaded areas denote one standard
		deviation. The proposed UQ exploration expands the explored operating
		region while maintaining a higher proportion of feasible and useful states.}
	\label{fig:eu_exploration_bins}
\end{figure}

\subsubsection{EU Evaluation via OOD Detection and Degradation Correlation}

We next evaluate whether the proposed EU can detect OOD states and whether its magnitude is correlated with the resulting operational
degradation. 
 Table~\ref{tab:ood_detection_auroc_fpr95} compares the proposed EU score with
 classical OOD detection baselines. For the proposed method, the
 episode-level EU is directly used as the detection score. For the baselines,
 we train conventional feature-space detectors using actor features extracted
 from the raw observation. The compared baselines include KDE, KNN (\(k=5\)), GMM (\(4\) components), and
 Mahalanobis-distance detectors~\cite{ruff2021unifyingad}. We report both
 AUROC and FPR95, where larger AUROC and smaller FPR95 indicate better OOD
 separation.
 As the ensemble size decreases, the detection performance degrades mildly but
 remains competitive with the feature-space baselines. 
 \begin{table}[h]
	\centering
		\caption{OOD detection performance in Test Case II.}
	\label{tab:ood_detection_auroc_fpr95}
	\begin{tabular}{llcc}
		\hline
		Group & Method & AUROC & FPR95 \\
		\hline
		Ours & Ensemble size 11 & $0.9856 \pm 0.0031$ & $0.0580 \pm 0.0105$ \\
		Ours & Ensemble size 9  & $0.9808 \pm 0.0086$ & $0.0586 \pm 0.0161$ \\
		Ours & Ensemble size 7  & $0.9709 \pm 0.0159$ & $0.0607 \pm 0.0277$ \\
		Ours & Ensemble size 5  & $0.9785 \pm 0.0066$ & $0.0783 \pm 0.0149$ \\
		\hline
		Baseline & KDE    & $0.9770 \pm 0.0030$ & $0.0594 \pm 0.0121$ \\
		Baseline & KNN    & $0.9761 \pm 0.0027$ & $0.0595 \pm 0.0119$ \\
		Baseline & GMM    & $0.9731 \pm 0.0027$ & $0.0641 \pm 0.0080$ \\
		Baseline & Mahal. & $0.9670 \pm 0.0058$ & $0.0842 \pm 0.0101$ \\
		\hline
	\end{tabular}
\end{table}
We further examine whether the EU score is aligned with the severity of
operational degradation. We compare EU with the episode-level
reward gap \(\Delta R\) and constraint-cost gap \(\Delta C\) defined in
Section~\ref{sec:fallback}. As shown in Fig.~\ref{fig:eu_ood_detection}, each
point corresponds to one held-out episode. Larger EU is generally associated
with larger \(\Delta R\) and \(\Delta C\).
\begin{figure}[h]
	\centering
	\includegraphics[width=\columnwidth]{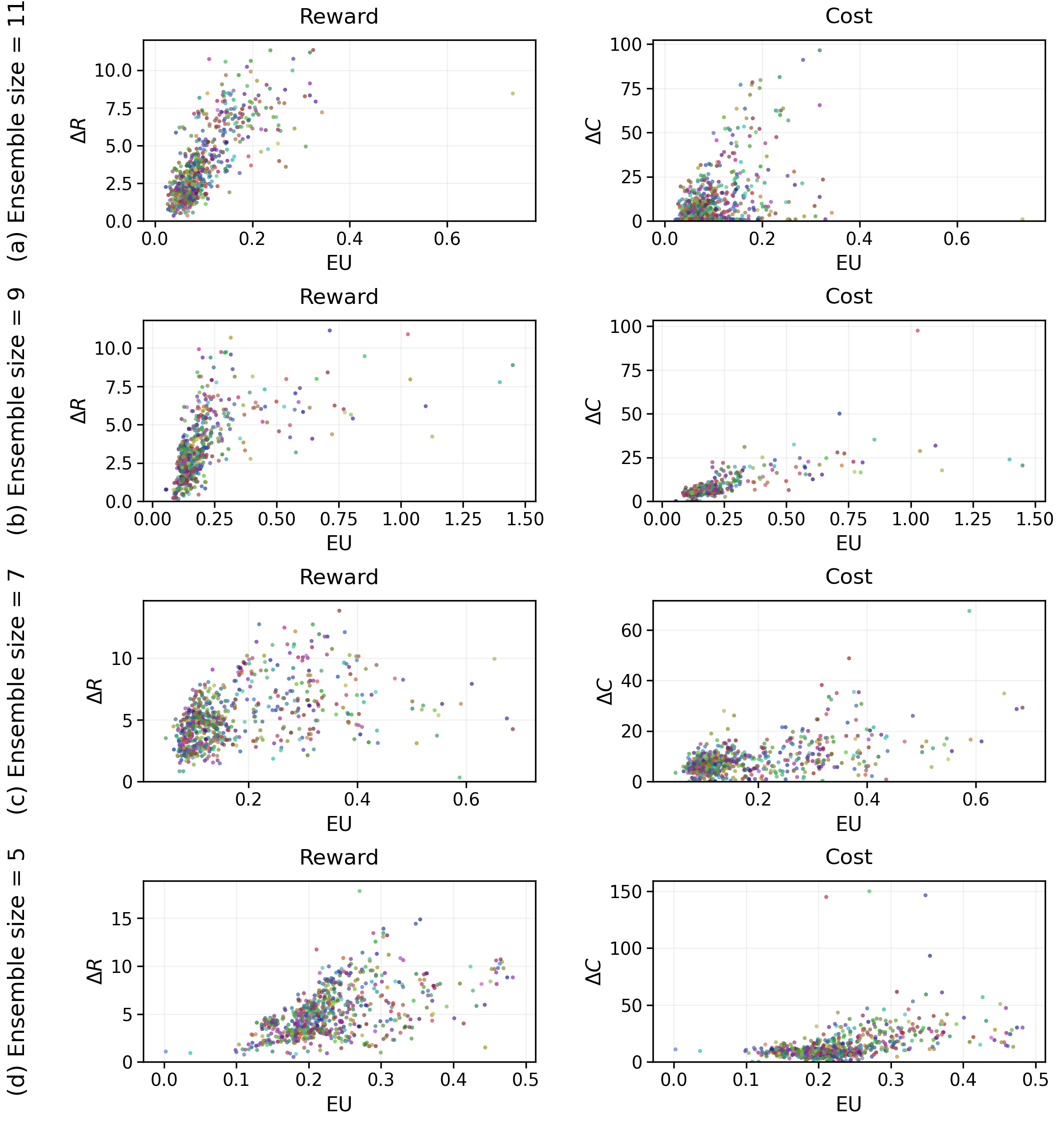}
	\caption{EU evaluation via degradation correlation. 
		Each point corresponds to one held-out episode. The horizontal axis is episode-level EU. 
		The vertical axes
		report the episode-level reward gap \(\Delta R\) and constraint-cost gap
		\(\Delta C\). Larger EU is associated with larger operational
		degradation, which supports its use as an epistemic reliability score for
		quantifying harmful distribution shifts.}
	\label{fig:eu_ood_detection}
\end{figure} 
Fig.~\ref{fig:relative_eu_distribution} further examines how the EU score changes as degradation severity increases. Since raw EU magnitudes are model-dependent, we convert EU values into within-model relative percentiles before pooling the selected models. 
The resulting distributions show a clear upward shift with both reward degradation and cost degradation, indicating that larger OOD
performance deterioration is associated with higher EU.
\begin{figure}[H]
	\centering
	\includegraphics[width=0.98\columnwidth]{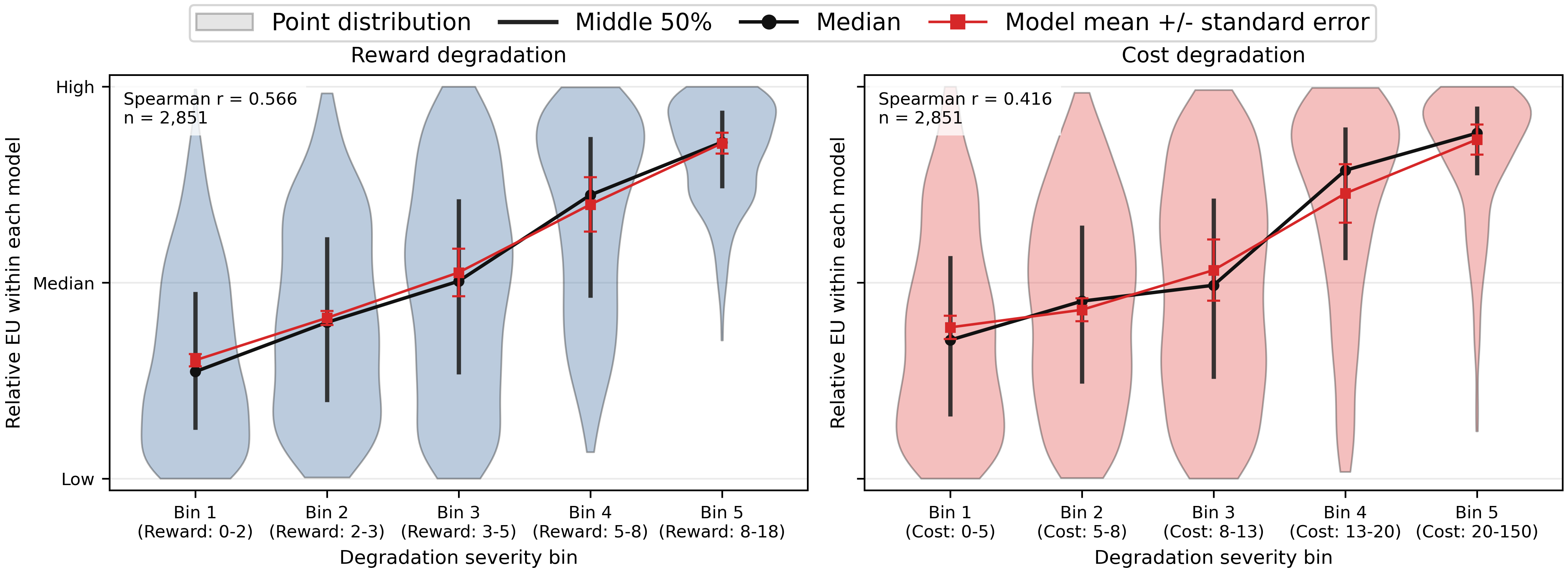}
	 \caption{
	 	Relative EU distributions across degradation severity bins.
	 	Raw EU values are normalized to within-model percentiles before pooling selected models.
	 	Reward and cost degradation are binned into five data-driven severity levels.
	 	Violin plots show point-level distributions; black markers show medians and interquartile ranges; red markers show model means with standard errors.
	 }
	\label{fig:relative_eu_distribution}
\end{figure} 
Table~\ref{tab:degradation_rank_actor_feature} evaluates whether each score
ranks harmful OOD episodes. The actor-feature baselines are unsupervised
InD-fitted novelty scores: GMM/KDE use feature-density NLL, Mahal. uses
Mahalanobis distance, and kNN uses nearest-neighbor distance.  
Like our EU score, these baselines are computed without degradation labels.
For fairness, all scores are
computed per trained model and converted to within-model quantile ranks before
reporting Spearman correlations with \(\Delta R,\Delta C\) and \(\mathrm{P@20}\)
for the top-20\% degraded episodes.
\begin{table}[h]
	\centering
	\caption{Degradation-ranking comparison.}
	\label{tab:degradation_rank_actor_feature}
	\footnotesize
	\setlength{\tabcolsep}{4pt}
	\begin{tabular}{lcccc}
		\toprule
		Score & Spearman \(\Delta R\) & Spearman \(\Delta C\)
		& \(\mathrm{P@20}\) \(\Delta R\) & \(\mathrm{P@20}\) \(\Delta C\) \\
		\midrule
		Ours EU
		& \(0.61 \pm 0.09\)
		& \(0.46 \pm 0.12\)
		& \(0.63 \pm 0.13\)
		& \(0.58 \pm 0.11\) \\
		GMM
		& \(-0.22 \pm 0.19\)
		& \(0.04 \pm 0.36\)
		& \(0.28 \pm 0.16\)
		& \(0.32 \pm 0.21\) \\
		KDE
		& \(-0.17 \pm 0.21\)
		& \(0.11 \pm 0.30\)
		& \(0.28 \pm 0.17\)
		& \(0.35 \pm 0.19\) \\
		Mahal.
		& \(-0.16 \pm 0.21\)
		& \(0.08 \pm 0.35\)
		& \(0.29 \pm 0.17\)
		& \(0.36 \pm 0.20\) \\
		kNN
		& \(-0.17 \pm 0.21\)
		& \(0.09 \pm 0.34\)
		& \(0.29 \pm 0.16\)
		& \(0.36 \pm 0.21\) \\
		\bottomrule
	\end{tabular}
\end{table}

\subsubsection{Closed-Loop Evaluation With EU-Triggered Fallback}

Finally, we evaluate the deployment rule in \eqref{eq:fallback_rule}. We
instantiate \(\mathcal D_{\mathrm{cal}}^{\mathrm{crit}}\) using critical
calibration states selected from the synthetic profile-shift scenarios
described above, including their combinations, which perturb daily load, PV,
price, and net-load trajectories and are disjoint from the held-out deployment
episodes. A state is included in this calibration set when its 
\(\Delta R\) and \(\Delta C\) gap exceeds the acceptable-state tolerance by a
factor of three. For each deployed model, the fallback threshold
\(\tau_{\mathrm{fb}}\) is calibrated from the lower
\(\epsilon_{\mathrm{miss}}\)-quantile of the critical-state EU scores in this
calibration set, following \eqref{eq:safety_threshold}. 
The calibration is
performed per model because raw EU scales are not comparable across
independently trained ensembles.

\begin{figure}[hb]
	\centering
	\includegraphics[width=0.9\columnwidth]{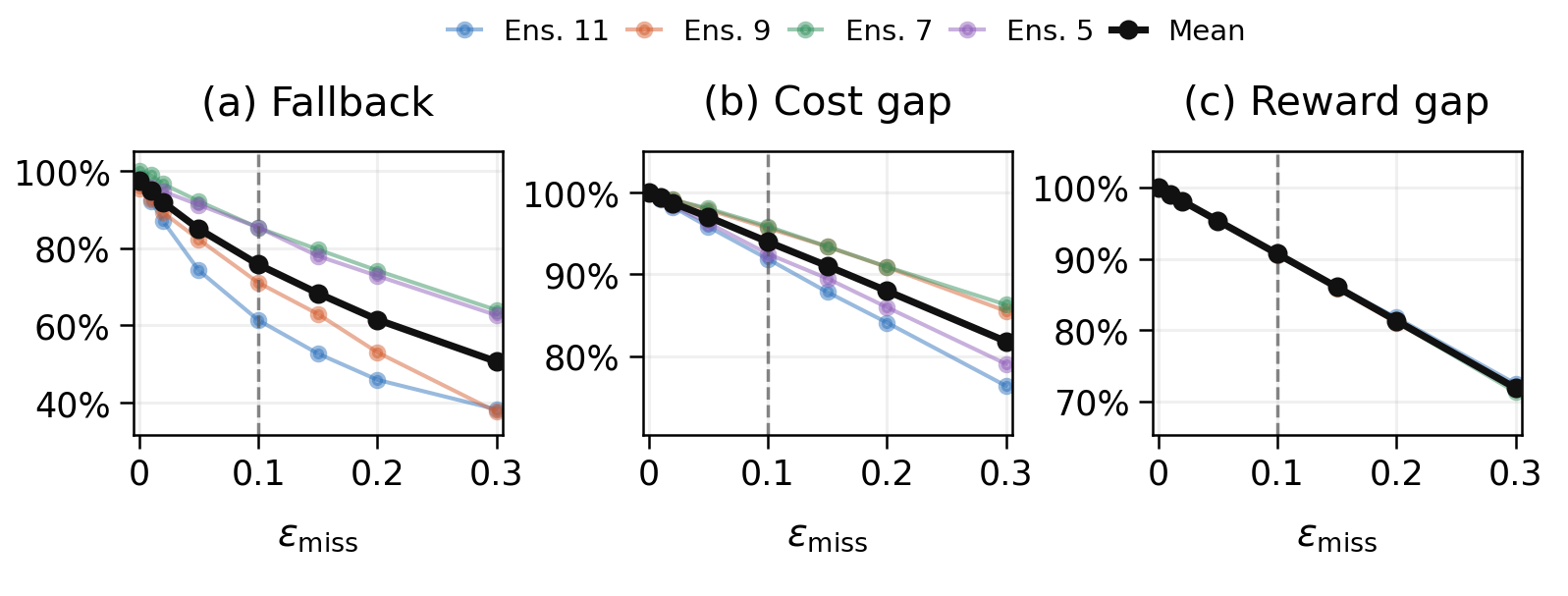}
	\caption{Closed-loop fallback ablation over
		\(\epsilon_{\mathrm{miss}}\). Panel (a) reports fallback invocation;
		panels (b) and (c) report the critical-state \(\Delta C\) and \(\Delta R\) gaps removed by fallback. Colors indicate ensemble size; black denotes
		the mean.}
	\label{fig:eu_fallback_ablation}
\end{figure}

Fig.~\ref{fig:eu_fallback_ablation} summarizes the fallback-rate/gap-removal
tradeoff as \(\epsilon_{\mathrm{miss}}\) varies. At \(\epsilon_{\mathrm{miss}}=0.10\), fallback is invoked for \(75.8\%\) of
held-out episodes and removes \(94.0\%\) of the critical constraint-cost gap
and \(90.7\%\) of the reward gap.
At \(\epsilon_{\mathrm{miss}}=0.20\), fallback
use drops to \(61.5\%\) while constraint-cost gap removal remains \(88.0\%\),
showing that EU-triggered fallback can reduce closed-loop cost with targeted
MISOCP handoffs.
To avoid relying solely on a fixed severe-OOD calibration threshold, we also
evaluate a threshold-free safety--intervention tradeoff in
Fig.~\ref{fig:safety_intervention_tradeoff}. OOD episodes are ranked by EU, and
the fallback rate is swept by assigning the highest-EU fraction to the MISOCP
controller. Increasing fallback coverage consistently reduces both cost
exceedance probability and the 95th-percentile constraint cost, indicating that
EU prioritizes episodes with larger empirical safety risk.

\begin{figure*}[!t]
	\centering
	\includegraphics[width=\textwidth]{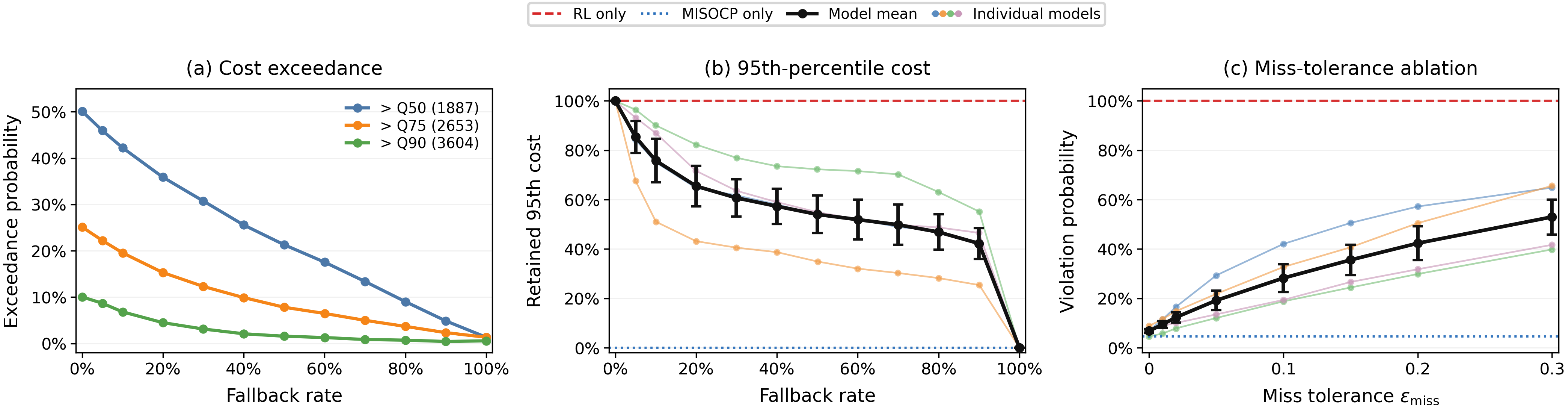}
	
	\caption{Empirical safety--intervention tradeoff of EU-triggered fallback.
		Panels (a) and (b) use a threshold-free evaluation: OOD episodes are ranked by
		EU, and the fallback rate is swept by assigning the highest-EU fraction of
		episodes to the MISOCP controller while leaving the remaining episodes
		controlled by the RL policy.
		Panel (a) reports the probability that the resulting episode constraint cost
		exceeds reference quantiles of the RL-only cost distribution. The numbers in
		parentheses are the corresponding RL-only constraint-cost quantile values; for
		example, \(>Q50(1887)\) means that the deployed controller has episode
		constraint cost larger than the RL-only median cost of 1887.
		Thus, lower curves indicate that EU-guided fallback preferentially removes
		episodes from the high-cost tail.
		Panel (b) reports the retained 95th-percentile constraint cost, normalized by
		the RL-only 95th-percentile cost.
		Panel (c) reports the \(\epsilon_{\mathrm{miss}}\)-based calibration ablation.
		Colored thin curves denote individual selected models, and black curves denote
		the model mean.}
 
	\label{fig:safety_intervention_tradeoff}
\end{figure*}

\section{Conclusion}\label{sec:conclusion}
We proposed an uncertainty-aware DRL framework for safe distribution-network operation by coupling distributional RL with UQ. The resulting second-order UQ decomposes predictive uncertainty into aleatoric and epistemic OOD uncertainty, enabling EU-triggered fallback control with improved constraint compliance in diverse operating conditions.

\appendices
\section{Proofs for Second-order UQ Constructions}
\label{app:uq_proofs} 
 
We adopt the same setting and notation as in Section~\ref{sec:uq_return_critic}.
In particular, the return space is one-dimensional.
On $\mathcal P(Y)$ we use the first-order 2-Wasserstein distance $\Wtwoone$
(induced by the squared Euclidean cost on $Y$), and on
$\mathcal O(Y)=\mathcal P(\mathcal P(Y))$ we use the induced second-order
2-Wasserstein distance $\Wtwotwo$ with ground metric $d_1=\Wtwoone$.

\subsection{Auxiliary Results on Wasserstein Distance in One Dimension}

We first record several standard lemmas on Wasserstein distances and
barycenters in one dimension~\cite{ref29,refPanaretosZemel}.

\begin{lemma}[Dirac second-order measures and couplings]
	\label{lem:dirac_coupling}
	For any $Q\in\mathcal O(Y)$ and $p\in\mathcal P(Y)$ such that
	$\mathbb E_{P\sim Q}\bigl[\bigl(\Wtwoone(P,p)\bigr)^2\bigr]<\infty$,
	$\bigl(\Wtwotwo(Q,\delta_p)\bigr)^2=\mathbb E_{P\sim Q}\bigl[\bigl(\Wtwoone(P,p)\bigr)^2\bigr]$,
	where $\delta_p$ denotes the Dirac mass at $p$ in $\mathcal O(Y)$.
	This follows from
	\cite{ref27}.
\end{lemma}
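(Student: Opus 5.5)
The plan is to show that the set of couplings between $Q$ and a Dirac mass $\delta_p$ is a singleton. Then the infimum in \eqref{eq:w2_second_order} is attained trivially, and its value is the integral of the cost against that unique coupling.

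First I would take $\gamma\in\Gamma(Q,\delta_p)$ and let $\pi_2$ denote the second marginal. Since $\pi_2\gamma=\delta_p$, we get $\gamma\bigl(\mathcal P(Y)\times(\mathcal P(Y)\setminus\{p\})\bigr)=\delta_p\bigl(\mathcal P(Y)\setminus\{p\}\bigr)=0$. Here singletons are Borel, because $(\mathcal P(Y),\Wtwoone)$ is a metric space; more precisely, we work on the Wasserstein space $\mathcal P_2(Y)$, where $\Wtwoone$ is finite. So $\gamma$ is concentrated on $\mathcal P(Y)\times\{p\}$.

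For any measurable rectangle $A\times B$ this gives
\[
\gamma(A\times B)=\gamma\bigl((A\times B)\cap(\mathcal P(Y)\times\{p\})\bigr)=\mathbf 1_B(p)\,\gamma\bigl(A\times\mathcal P(Y)\bigr)=Q(A)\,\delta_p(B).
\]
Rectangles form a $\pi$-system generating the product $\sigma$-algebra, so $\gamma=Q\otimes\delta_p$. This shows $\Gamma(Q,\delta_p)=\{Q\otimes\delta_p\}$; the product is itself a coupling, so the set is nonempty.

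With the coupling identified, Fubini gives
\[
\bigl(\Wtwotwo(Q,\delta_p)\bigr)^2=\int \bigl(\Wtwoone(P,\tilde p)\bigr)^2\,\mathrm d(Q\otimes\delta_p)(P,\tilde p)=\mathbb E_{P\sim Q}\bigl[\bigl(\Wtwoone(P,p)\bigr)^2\bigr].
\]
The integrability hypothesis guarantees this quantity is finite, so squaring \eqref{eq:w2_second_order} (with $p=2$) is legitimate. It also confirms that $Q$ and $\delta_p$ lie at finite second-order distance.

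The step that needs the most care is measurability. The map $P\mapsto\Wtwoone(P,p)^2$ must be Borel on $\mathcal P(Y)$, which holds because it is continuous, indeed $1$-Lipschitz in $\Wtwoone$ before squaring. The singleton $\{p\}$ must also be measurable, so that the concentration argument is valid; this again follows from the metric structure. Once these two points are settled, the rest is the routine uniqueness-of-coupling argument above.
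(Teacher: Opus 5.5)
Your argument is correct and is the standard one. The paper gives no proof beyond citing \cite{ref27}, and the identity there rests on your observation that $\Gamma(Q,\delta_p)=\{Q\otimes\delta_p\}$, so the infimum in \eqref{eq:w2_second_order} reduces to the integral against that single coupling. As a minor simplification, you do not need the measurability of $\{p\}$ that you flag: $\gamma(A\times B)\le\delta_p(B)$ and $\gamma(A\times B^{c})\le\delta_p(B^{c})$ already force $\gamma(A\times B)=Q(A)\,\mathbf 1_B(p)$ on every measurable rectangle.
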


\begin{lemma}[Quantile representation of $\Wtwoone$ in one dimension~\cite{ref29}]
	\label{lem:w2_quantile}
	Let $Y\subset\mathbb R$, and let $p,\tilde p\in\mathcal P(Y)$ with quantile functions
	$Q_p,Q_{\tilde p}:(0,1)\to\mathbb R$. Then
	$\bigl(\Wtwoone(p,\tilde p)\bigr)^2=\int_0^1 \bigl[Q_p(\tau)-Q_{\tilde p}(\tau)\bigr]^2\,\mathrm d\tau$.
\end{lemma}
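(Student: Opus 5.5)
The plan is to reduce the statement to the classical fact that, on the real line, the monotone (quantile) coupling is optimal for the squared cost. Let $F_p$ and $F_{\tilde p}$ be the distribution functions of $p$ and $\tilde p$. Set $Q_p(\tau)=\inf\{y:F_p(y)\ge\tau\}$, and define $Q_{\tilde p}$ in the same way.

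\emph{Step 1 (admissible coupling).} Take $U\sim\mathcal U(0,1)$. The generalized inverse satisfies $Q_p(\tau)\le y \iff \tau\le F_p(y)$, so $Q_p(U)$ has law $p$ and $Q_{\tilde p}(U)$ has law $\tilde p$. Hence the law $\gamma^\star$ of $(Q_p(U),Q_{\tilde p}(U))$ lies in $\Gamma(p,\tilde p)$. Its cost is exactly $\int_0^1[Q_p(\tau)-Q_{\tilde p}(\tau)]^2\,\mathrm d\tau$, which gives
\[
\bigl(\Wtwoone(p,\tilde p)\bigr)^2\le\int_0^1[Q_p-Q_{\tilde p}]^2\,\mathrm d\tau .
\]
We assume finite second moments throughout, so that both sides are finite.

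\emph{Step 2 (optimality of the monotone coupling).} This is the main obstacle. Expand
\[
\mathbb E_\gamma[(X-\tilde X)^2]=\mathbb E_p[X^2]+\mathbb E_{\tilde p}[\tilde X^2]-2\,\mathbb E_\gamma[X\tilde X].
\]
The first two terms depend only on the marginals, so it suffices to show that $\mathbb E_\gamma[X\tilde X]\le\mathbb E_{\gamma^\star}[X\tilde X]$ for every $\gamma\in\Gamma(p,\tilde p)$.

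First suppose $X,\tilde X\ge0$. Then the Hoeffding-type identity gives
\[
\mathbb E_\gamma[X\tilde X]=\int_0^\infty\!\!\int_0^\infty \gamma(X>x,\tilde X>y)\,\mathrm dx\,\mathrm dy .
\]
The Fréchet--Hoeffding upper bound gives $\gamma(X>x,\tilde X>y)\le\min\{1-F_p(x),1-F_{\tilde p}(y)\}$. This bound is attained by $\gamma^\star$, because
\[
\{Q_p(U)>x\}\cap\{Q_{\tilde p}(U)>y\}=\{U>\max(F_p(x),F_{\tilde p}(y))\}.
\]

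For general real-valued supports, I would use the covariance form of Hoeffding's identity,
\[
\mathrm{Cov}_\gamma(X,\tilde X)=\iint\bigl[\gamma(X\le x,\tilde X\le y)-F_p(x)F_{\tilde p}(y)\bigr]\,\mathrm dx\,\mathrm dy .
\]
I would then bound the integrand by $\min\{F_p(x),F_{\tilde p}(y)\}-F_p(x)F_{\tilde p}(y)$ in the same way. This avoids the sign restriction.

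An alternative route for Step 2 is cyclical monotonicity. Any optimal plan for a convex cost $c(x-y)$ on $\mathbb R$ has monotone support, and the only monotone coupling with the given marginals is $\gamma^\star$. That route, however, requires the existence of optimal plans and is heavier, so I would use the Hoeffding argument.

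\emph{Step 3 (conclusion).} Combining Steps 1 and 2 shows that the infimum in the definition of $\Wtwoone$ is attained at $\gamma^\star$, and therefore
\[
\bigl(\Wtwoone(p,\tilde p)\bigr)^2=\int_0^1[Q_p(\tau)-Q_{\tilde p}(\tau)]^2\,\mathrm d\tau .
\]
The remaining technical points are the integrability needed for Fubini in the Hoeffding identity, which follows from finite second moments, and the handling of atoms. Atoms are already covered because the generalized inverse is used throughout.
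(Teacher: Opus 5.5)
Your proof is correct, but it is not the paper's route, because the paper gives no proof of this lemma: it quotes the identity as a standard fact from the optimal-transport literature. A common textbook proof of that fact goes through three steps. First, an optimal plan exists, by tightness and lower semicontinuity. Second, its support is $c$-cyclically monotone, hence monotone. Third, the only coupling of $p$ and $\tilde p$ with monotone support is the quantile coupling $(Q_p(U),Q_{\tilde p}(U))$.

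You take the Fr\'echet--Hoeffding route instead. The expansion reduces optimality to maximizing $\mathbb E_\gamma[X\tilde X]$, that is, the covariance. Hoeffding's identity then turns this into the pointwise bound $\gamma(X\le x,\tilde X\le y)\le\min\{F_p(x),F_{\tilde p}(y)\}$, which the quantile coupling attains.

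What this buys you:
\begin{itemize}
\item The proof is elementary and self-contained, with no existence or compactness theory.
\item Atoms are handled automatically by the generalized inverse.
\item Uniqueness of the optimal plan comes for free. Equality of costs forces the nonnegative, right-continuous integrand $\min\{F_p,F_{\tilde p}\}-\gamma(X\le\cdot,\tilde X\le\cdot)$ to vanish identically.
\end{itemize}
The price is generality. Your reduction to the cross term is specific to the quadratic cost and needs finite second moments. The monotonicity argument applies verbatim to any convex cost $h(x-\tilde x)$, hence to every $W_p$, whenever the optimal cost is finite.

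Your restriction to finite second moments is harmless: that is where $\Wtwoone$ is a metric, and where the paper actually uses the lemma. The statement is, however, written for all of $\mathcal P(Y)$, with the value $+\infty$ allowed. To cover that case, note first that your Step~1 needs no moments. For the lower bound, let $T_n(x)=\max\{-n,\min\{n,x\}\}$.
\begin{itemize}
\item $T_n$ is monotone and $1$-Lipschitz, so applying it to both coordinates does not increase the cost of any coupling.
\item It maps the quantile coupling of $(p,\tilde p)$ to the quantile coupling of the clipped marginals, $(T_n(Q_p(U)),T_n(Q_{\tilde p}(U)))$.
\item The quantity $|T_n(a)-T_n(b)|$ increases to $|a-b|$ as $n\to\infty$.
\end{itemize}
Your bounded-case result combined with monotone convergence then gives the identity in general.
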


\begin{lemma}[One-dimensional Wasserstein barycenter~\cite{refAguehCarlier,ref29}]
	\label{lem:1d_barycenter_quantile}
	Let $p_1,\dots,p_R\in\mathcal P(Y)$ with quantile functions $Q_{p_r}$.
	Consider $p^\star \in \argmin_{p\in\mathcal P(Y)} \frac{1}{R}\sum_{r=1}^R \bigl(\Wtwoone(p_r,p)\bigr)^2$.
	Then a minimiser exists and any minimiser has quantile function
	$Q_{p^\star}(\tau)=\frac{1}{R}\sum_{r=1}^R Q_{p_r}(\tau)$ for $\tau\in(0,1)$.
\end{lemma}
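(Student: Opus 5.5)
The statement to prove is Lemma~\ref{lem:1d_barycenter_quantile}. I will reduce everything to a Hilbert-space least-squares problem through the quantile embedding of Lemma~\ref{lem:w2_quantile}. I work in $\mathcal P_2(Y)$, the laws with finite second moment, since the objective is infinite otherwise. Let $\mathcal K\subset L^2(0,1)$ be the set of nondecreasing, left-continuous functions, which are exactly the quantile functions of laws in $\mathcal P_2(\mathbb R)$. The map $p\mapsto Q_p$ is a bijection from $\mathcal P_2(\mathbb R)$ onto $\mathcal K$, and by Lemma~\ref{lem:w2_quantile} it is an isometry, with $(\Wtwoone(p,\tilde p))^2=\|Q_p-Q_{\tilde p}\|_{L^2}^2$.

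The barycenter problem therefore becomes
\[
\min_{q\in\mathcal K}\ \frac1R\sum_{r=1}^R\|Q_{p_r}-q\|_{L^2}^2 .
\]

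\textbf{Expansion.} Set $\bar q:=\frac1R\sum_r Q_{p_r}$ and use the bias--variance identity in $L^2$:
\[
\frac1R\sum_r\|Q_{p_r}-q\|^2=\frac1R\sum_r\|Q_{p_r}-\bar q\|^2+\|\bar q-q\|^2 .
\]
This follows by expanding the square; the cross term vanishes because $\sum_r(Q_{p_r}-\bar q)=0$. The first term does not depend on $q$. Hence the objective is minimized over $\mathcal K$ exactly when $\|\bar q-q\|_{L^2}=0$, provided $\bar q\in\mathcal K$.

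\textbf{Feasibility of $\bar q$.} This is the key step. A finite average of nondecreasing, left-continuous functions is again nondecreasing and left-continuous. It lies in $L^2$ by the triangle inequality, since each $Q_{p_r}\in L^2$. So $\bar q$ is the quantile function of a unique law $p^\star\in\mathcal P_2(\mathbb R)$. If $Y$ is a proper subset of $\mathbb R$, I would also check that $p^\star$ is supported in $Y$. This holds when $Y$ is an interval, because $\bar q$ takes values between $\min_r Q_{p_r}$ and $\max_r Q_{p_r}$. For a general $Y$ I expect this to be the main obstacle: for example, two atoms in a non-convex $Y$ can average to a point outside $Y$. In that case I would state the result for the convex hull of $Y$, or simply take $Y$ to be an interval, which is the return range. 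This gives existence of a minimiser.

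\textbf{Uniqueness.} Any minimiser $p$ must satisfy $\|Q_p-\bar q\|_{L^2}=0$, so $Q_p=\bar q$ almost everywhere on $(0,1)$. Two left-continuous nondecreasing functions that agree almost everywhere agree at every point of $(0,1)$: each point is a left limit of points in the full-measure set where they coincide. Therefore $Q_{p^\star}(\tau)=\frac1R\sum_r Q_{p_r}(\tau)$ for every $\tau\in(0,1)$, and the minimiser is unique.
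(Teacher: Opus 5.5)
Your proof is correct and is essentially the standard argument behind the references the paper cites in lieu of a proof; the paper supplies no argument of its own. As in those references, the quantile map embeds $\mathcal P_2(\mathbb R)$ isometrically onto the convex cone of nondecreasing left-continuous functions in $L^2(0,1)$, the bias--variance decomposition around $\bar q$ identifies the minimiser, and left-continuity upgrades a.e.\ equality to equality on all of $(0,1)$.

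The caveats you flag are genuine missing hypotheses, not technicalities. First, one needs $p_r\in\mathcal P_2(Y)$, and you should state this as an assumption rather than infer it from finiteness of the objective. Second, one needs $Y$ convex: for $Y=\{0,1\}$, $p_1=\delta_0$, $p_2=\delta_1$, every $p\in\mathcal P(Y)$ attains the minimal value $\tfrac12$, and none has quantile function $\equiv\tfrac12$.
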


\begin{lemma}[Projection onto Dirac measures and variance~\cite{refPanaretosZemel}]
	\label{lem:w2_dirac_variance}
	Let $p\in\mathcal P(Y)$ have finite second moment and mean $\mu(p)$.
	Then $\inf_{y\in Y}\bigl(\Wtwoone(p,\delta_y)\bigr)^2
	=\bigl(\Wtwoone(p,\delta_{\mu(p)})\bigr)^2
	=\mathrm{Var}_{Y\sim p}(Y)$,
	where $\delta_y$ is the Dirac mass at $y$.
\end{lemma}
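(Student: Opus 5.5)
The target is Lemma~\ref{lem:w2_dirac_variance}: for $p$ with finite second moment, $\inf_{y\in Y}\bigl(\Wtwoone(p,\delta_y)\bigr)^2$ is attained at $y=\mu(p)$ and equals $\mathrm{Var}_{Y\sim p}(Y)$. The plan is to compute $\bigl(\Wtwoone(p,\delta_y)\bigr)^2$ explicitly for an arbitrary $y\in Y$ and then minimize a scalar quadratic.

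\textbf{Step 1: explicit transport cost to a Dirac.} The key observation is that $\Gamma(p,\delta_y)$ contains only one coupling, namely the product $p\otimes\delta_y$. Indeed, any coupling $\gamma$ with second marginal $\delta_y$ satisfies $\gamma(Y\times(Y\setminus\{y\}))=0$, so $\gamma$ is concentrated on $Y\times\{y\}$, and its first marginal then forces $\gamma=p\otimes\delta_y$. Hence
\[
\bigl(\Wtwoone(p,\delta_y)\bigr)^2=\int_Y (x-y)^2\,\mathrm dp(x).
\]
As a cross-check, Lemma~\ref{lem:w2_quantile} gives the same formula: since $Q_{\delta_y}\equiv y$ on $(0,1)$ and $Q_p$ pushes Lebesgue measure on $(0,1)$ forward to $p$, we get $\int_0^1(Q_p(\tau)-y)^2\,\mathrm d\tau=\int (x-y)^2\,\mathrm dp(x)$. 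The value is finite by the second-moment assumption.

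\textbf{Step 2: bias--variance decomposition.} Write $x-y=(x-\mu(p))+(\mu(p)-y)$ and expand the square. The cross term integrates to zero by the definition of $\mu(p)$, which leaves
\[
\bigl(\Wtwoone(p,\delta_y)\bigr)^2=\mathrm{Var}_{Y\sim p}(Y)+(\mu(p)-y)^2 .
\]
This expression is minimized exactly at $y=\mu(p)$, with minimum value $\mathrm{Var}_{Y\sim p}(Y)$. That yields both equalities in the lemma.

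\textbf{Main obstacle: admissibility of $\mu(p)$.} The only delicate point is checking that $\mu(p)\in Y$, so that the infimum over $y\in Y$ is really attained at the mean. This holds if $Y$ is an interval, or more generally if $Y$ is convex, since the mean of a probability measure supported on a convex set lies in (the closure of) that set. That is the natural setting for a return space. If $Y$ were an arbitrary subset of $\mathbb R$, the infimum over $Y$ could strictly exceed the variance. I would therefore state explicitly that $Y$ is an interval, or that the infimum is taken over $y\in\mathbb R$, as the paper implicitly assumes.
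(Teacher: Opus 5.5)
Your proof is correct. The paper gives no proof of this lemma and only cites Panaretos--Zemel; your argument (the unique coupling $p\otimes\delta_y$ followed by the bias--variance identity $\bigl(\Wtwoone(p,\delta_y)\bigr)^2=\mathrm{Var}_{Y\sim p}(Y)+(\mu(p)-y)^2$) is the standard argument behind that citation.

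Your caveat is legitimate and worth keeping. As written, the statement needs $\mu(p)\in Y$, for example $Y$ an interval. In that case, in one dimension, $p(Y)=1$ forces $\mu(p)$ into $Y$ itself, so the hedge about the closure is unnecessary. The paper makes the same assumption implicitly when it forms $m=(y^\star)_\#Q\in\mathcal P(Y)$ with $y^\star(P)=\mu(P)$ in the proof of Lemma~\ref{lem:ual_upper_bound}.
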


\subsection{Return-based Critic}
\label{app:uq_proofs_cont}

We now derive the critic indices in
Section~\ref{sec:uq_return_critic} from the general distance-based definitions
\eqref{eq:utot_def}--\eqref{eq:uep_def}, using the one-dimensional results in
Section~\ref{app:uq_proofs}. 

\subsubsection{Epistemic component as Wasserstein Fr\'echet variance}
 
We start from the epistemic index $U_{\mathrm{ep}}(Q)$ in \eqref{eq:uep_def}
with the choices $d_1=\Wtwoone$ on $\mathcal P(Y)$ and the induced second-order
metric $\Wtwotwo$ on $\mathcal O(Y)$. For any $Q\in\mathcal O(Y)$,
$U_{\mathrm{ep}}(Q)=\inf_{\delta_p \in \mathcal S_{\mathrm{ep}}}\Wtwotwo\bigl(Q,\delta_p\bigr)
=\inf_{p \in \mathcal P(Y)}\Wtwotwo\bigl(Q,\delta_p\bigr)$.
By Lemma~\ref{lem:dirac_coupling}, $\bigl(\Wtwotwo(Q,\delta_p)\bigr)^2
=\mathbb E_{P \sim Q}\bigl[\bigl(\Wtwoone(P,p)\bigr)^2\bigr]$.
Thus, up to a square root, the epistemic index is given by the Fr\'echet
functional on $(\mathcal P(Y),\Wtwoone)$:
$U_{\mathrm{ep}}^{2}(Q):=\inf_{p \in \mathcal P(Y)}\mathbb E_{P \sim Q}\bigl[\bigl(\Wtwoone(P,p)\bigr)^2\bigr]$.
We use $U_{\mathrm{ep}}^{2}(Q)$ as the basic epistemic quantity; it is a
monotone function of $U_{\mathrm{ep}}(Q)$ and preserves the ordering of
epistemic uncertainty levels.
Specializing $U_{\mathrm{ep}}^{2}(Q)$ to the empirical law
$\widehat Q_B(s,a)$ in \eqref{eq:q_critic} implies that any minimizer
is a $\Wtwoone$-barycenter of $\{P_b\}_{b=1}^B$,
which in one dimension admits the closed-form quantile averaging in
Lemma~\ref{lem:1d_barycenter_quantile}; hence the resulting empirical Fr\'echet variance
coincides with $\mathrm{EU}_{\mathrm{crit}}(s,a)$ in \eqref{eq:eu_critic}.

\subsubsection{Aleatoric component as expected intrinsic variance}

Starting from the aleatoric definition \eqref{eq:ual_def} with the reference family
$\mathcal S_{\mathrm{al}}$, we quantify the distance of $Q$ to mixtures of Dirac predictors.
For the derivations below, we work with the squared 2-Wasserstein distance on $\mathcal O(Y)$.
In the one-dimensional setting of Section~\ref{sec:uq_return_critic}, this projection admits
an exact characterization through pointwise projection of each first-order draw $P\sim Q$ onto
Dirac measures. The resulting expression reduces the aleatoric index to the expected intrinsic
variance of the first-order distributions, which leads directly to the closed-form estimator
used in \eqref{eq:au_critic}. 
\begin{lemma}[Exact expression via projection to Dirac predictors]
	\label{lem:ual_upper_bound}
	Let $Q \in \mathcal O(Y)$ be any second-order distribution. Define
	\(\widetilde U_{\mathrm{al}}^2(Q)
	=
	\mathbb E_{P \sim Q}\!\Bigl[
	\inf_{y \in Y}
	\bigl(\Wtwoone(P,\delta_y)\bigr)^2
	\Bigr]\).
	Then
	\(U_{\mathrm{al}}^2(Q)
	:=
	\inf_{\delta_m \in \mathcal S_{\mathrm{al}}}
	\bigl(\Wtwotwo(Q,\delta_m)\bigr)^2
	=
	\widetilde U_{\mathrm{al}}^2(Q)\).
\end{lemma}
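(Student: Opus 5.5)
We prove the identity by two inequalities.

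\emph{Lower bound} ($\ge$). Fix $m\in\mathcal P(Y)$ and any coupling $\gamma\in\Gamma(Q,\delta_m)$. The second marginal $\delta_m=\int_Y\delta_{\delta_y}\,\mathrm dm(y)$ is supported on the set of first-order Dirac measures $\{\delta_y:y\in Y\}$. Hence $\gamma$-almost every pair $(P,\tilde p)$ has $\tilde p=\delta_y$ for some $y\in Y$. For every such pair,
\[
\bigl(\Wtwoone(P,\tilde p)\bigr)^2\ge\inf_{y\in Y}\bigl(\Wtwoone(P,\delta_y)\bigr)^2=:f(P).
\]
Integrating against $\gamma$ and using that the first marginal of $\gamma$ is $Q$ gives
\[
\int\bigl(\Wtwoone\bigr)^2\,\mathrm d\gamma\ge\mathbb E_{P\sim Q}[f(P)]=\widetilde U_{\mathrm{al}}^2(Q).
\]
Taking the infimum over $\gamma$ and then over $m$ yields $U_{\mathrm{al}}^2(Q)\ge\widetilde U_{\mathrm{al}}^2(Q)$. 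Measurability of $f$ is not an issue, since $f(P)=\mathrm{Var}_{Y\sim P}(Y)$ by Lemma~\ref{lem:w2_dirac_variance}, which is a measurable function of $P$.

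\emph{Upper bound} ($\le$). If $\widetilde U_{\mathrm{al}}^2(Q)=\infty$ there is nothing to prove, so assume it is finite. Then $Q$-a.e.\ $P$ has finite second moment, with mean $\mu(P)$. By Lemma~\ref{lem:w2_dirac_variance}, the projection of $P$ onto Dirac measures is attained at $\delta_{\mu(P)}$. Define the deterministic map $T(P)=\delta_{\mu(P)}$. Two facts need checking.
\begin{itemize}
\item $\mu(P)\in Y$. This holds if $Y$ is a closed interval, the natural case for return spaces; otherwise we replace $Y$ by its closed convex hull, or argue with near-minimisers instead.
\item $T$ is measurable. This follows from measurability of $P\mapsto\mu(P)$.
\end{itemize}
Let $m$ be the law of $\mu(P)$ under $P\sim Q$. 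Then the pushforward of $Q$ under $T$ equals $\delta_m$, so $\delta_m\in\mathcal S_{\mathrm{al}}$. Moreover $\gamma=(\mathrm{id},T)_\#Q$ is a coupling in $\Gamma(Q,\delta_m)$, and its cost is
\[
\mathbb E_{P\sim Q}\bigl[\bigl(\Wtwoone(P,\delta_{\mu(P)})\bigr)^2\bigr]=\mathbb E_{P\sim Q}\bigl[\mathrm{Var}_P\bigr]=\widetilde U_{\mathrm{al}}^2(Q).
\]
Therefore $U_{\mathrm{al}}^2(Q)\le\widetilde U_{\mathrm{al}}^2(Q)$.

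The main obstacle is the measure-theoretic step in the upper bound: showing $T$ is a measurable selection landing in $Y$. We resolve it via the explicit minimiser from Lemma~\ref{lem:w2_dirac_variance}. If we only had an infimum rather than an explicit minimiser, we would instead use an $\varepsilon$-optimal measurable selection and send $\varepsilon\to0$.

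As a consequence, combining the lemma with Lemma~\ref{lem:w2_dirac_variance} gives $U_{\mathrm{al}}^2(Q)=\mathbb E_{P\sim Q}[\mathrm{Var}_P]$. Its empirical version is exactly \eqref{eq:au_critic}.
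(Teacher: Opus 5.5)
Your proposal is correct and follows essentially the same proof as the paper: the lower bound uses the fact that any coupling with $\delta_m$ pairs $P$ with a Dirac predictor, and the upper bound uses the coupling $(\mathrm{Id},T)_\#Q$ with $T(P)=\delta_{\mu(P)}$ and $m$ the law of $\mu(P)$. Your added remarks on measurability, the case $\widetilde U_{\mathrm{al}}^2(Q)=\infty$, and the requirement $\mu(P)\in Y$ (e.g.\ $Y$ convex) spell out assumptions the paper uses without stating them.
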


\begin{IEEEproof}
	We prove the two inequalities.
	
	\paragraph*{Step 1} We show that
	$U_{\mathrm{al}}^{2}(Q) \le \widetilde{U}_{\mathrm{al}}^{2}(Q)$.
	For each $P \in \mathcal P(Y)$ with finite second moment, take
	$y^\star(P):=\mu(P)$, the mean of $P$.
	By Lemma~\ref{lem:w2_dirac_variance}, $y^\star(P)$ is an optimizer of
	$\inf_{y\in Y} \bigl(\Wtwoone(P,\delta_y)\bigr)^2$, and
	\(\inf_{y\in Y} \bigl(\Wtwoone(P,\delta_y)\bigr)^2
	=
	\bigl(\Wtwoone(P,\delta_{\mu(P)})\bigr)^2\).
	Define the pushforward measure $m := (y^\star)_\# Q \in \mathcal P(Y)$ and
	consider $\delta_m \in \mathcal S_{\mathrm{al}}$.
	Define $T:\mathcal P(Y)\to \mathcal P(Y)$ by
	$T(P):=\delta_{y^\star(P)}=\delta_{\mu(P)}$,
	and construct the coupling
	$\gamma := (\mathrm{Id},T)_\# Q \in \Gamma(Q,\delta_m)$.
	Then, by the definition of the second-order distance $\Wtwotwo$ on $\mathcal O(Y)$
	with ground metric $\Wtwoone$ on $\mathcal P(Y)$,
	\(\bigl(\Wtwotwo(Q,\delta_m)\bigr)^2
	\le
	\int_{\mathcal P(Y)\times\mathcal P(Y)}
	\bigl(\Wtwoone(P,\tilde P)\bigr)^2
	\,\mathrm d\gamma(P,\tilde P)
	=
	\mathbb E_{P\sim Q}
	\Bigl[
	\bigl(\Wtwoone(P,T(P))\bigr)^2
	\Bigr]\).
	Thus,
	\(\bigl(\Wtwotwo(Q,\delta_m)\bigr)^2
	\le
	\mathbb E_{P\sim Q}
	\Bigl[
	\bigl(\Wtwoone(P,\delta_{\mu(P)})\bigr)^2
	\Bigr]
	=
	\widetilde U_{\mathrm{al}}^2(Q)\).
	Taking the infimum over $\delta_m \in \mathcal S_{\mathrm{al}}$ yields
	$U_{\mathrm{al}}^{2}(Q)\le \widetilde U_{\mathrm{al}}^{2}(Q)$.
	
	\paragraph*{Step 2} We show that $U_{\mathrm{al}}^2(Q) \ge \widetilde U_{\mathrm{al}}^2(Q)$.
	Fix any $m \in \mathcal P(Y)$ and any coupling $\gamma \in \Gamma(Q,\delta_m)$.
	Since $\delta_m$ is supported on Dirac predictors $\delta_y$, we have
	$\tilde P=\delta_y$ $\gamma$-almost surely for some $y\in Y$.
	Hence, for $\gamma$-almost every $(P,\tilde P)$,
	\(\bigl(\Wtwoone(P,\tilde P)\bigr)^2
	=
	\bigl(\Wtwoone(P,\delta_y)\bigr)^2
	\ge
	\inf_{y' \in Y}
	\bigl(\Wtwoone(P,\delta_{y'})\bigr)^2\).
	Integrating both sides with respect to $\gamma$ and using that the first marginal
	of $\gamma$ is $Q$, we obtain
	\(\int_{\mathcal P(Y)\times\mathcal P(Y)}
	\bigl(\Wtwoone(P,\tilde P)\bigr)^2
	\,\mathrm d\gamma(P,\tilde P)
	\ge
	\mathbb E_{P\sim Q}
	\Bigl[
	\inf_{y' \in Y}
	\bigl(\Wtwoone(P,\delta_{y'})\bigr)^2
	\Bigr]
	=
	\widetilde U_{\mathrm{al}}^2(Q)\).
	Taking the infimum over all couplings $\gamma \in \Gamma(Q,\delta_m)$ yields
	\(\bigl(\Wtwotwo(Q,\delta_m)\bigr)^2
	\ge
	\widetilde U_{\mathrm{al}}^2(Q)\).
	Finally, taking the infimum over $\delta_m \in \mathcal S_{\mathrm{al}}$ gives
	$U_{\mathrm{al}}^2(Q)\ge \widetilde U_{\mathrm{al}}^2(Q)$.
	
	Combining Step 1 and Step 2 concludes that
	\(U_{\mathrm{al}}^2(Q)=\widetilde U_{\mathrm{al}}^2(Q)\).
\end{IEEEproof}

Combining Lemma~\ref{lem:ual_upper_bound} with
Lemma~\ref{lem:w2_dirac_variance} yields an explicit expression for
\(\widetilde U_{\mathrm{al}}^2(Q)\) in terms of intrinsic variances.

\begin{corollary}[Expected intrinsic variance]
	\label{cor:ual_variance}
	Let \(Q \in \mathcal O(Y)\) and let \(P \sim Q\) have finite second
	moments almost surely. Then
	$
		\widetilde U_{\mathrm{al}}^2(Q)
		=
		\mathbb E_{P \sim Q}
		\Bigl[
		\mathrm{Var}_{Y \sim P}(Y)
		\Bigr].
	$
\end{corollary}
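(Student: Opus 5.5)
The corollary should follow by applying Lemma~\ref{lem:w2_dirac_variance} pointwise inside the expectation that defines \(\widetilde U_{\mathrm{al}}^2(Q)\) in Lemma~\ref{lem:ual_upper_bound}.

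\textbf{Step 1 (pointwise identity).} Fix \(Q\in\mathcal O(Y)\). By hypothesis there is a \(Q\)-full-measure set \(\mathcal P_2\subset\mathcal P(Y)\) of laws with finite second moment. For every \(P\in\mathcal P_2\), Lemma~\ref{lem:w2_dirac_variance} gives
\(\inf_{y\in Y}\bigl(\Wtwoone(P,\delta_y)\bigr)^2=\mathrm{Var}_{Y\sim P}(Y)\).
One remark is needed here. If \(Y\) is not convex, \(\mu(P)\) might fall outside \(Y\). I would either assume \(Y\) is an interval, as for a return space, or extend \(Y\) to its convex hull as the paper's lemma implicitly does.

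A quick sanity check of that lemma is available through Lemma~\ref{lem:w2_quantile}. For any \(y\),
\(\int_0^1 (Q_P(\tau)-y)^2\,\mathrm d\tau=\mathrm{Var}(P)+(\mu(P)-y)^2\),
and this is minimized exactly at \(y=\mu(P)\).

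\textbf{Step 2 (integrate).} The two functions
\(P\mapsto \inf_y (\Wtwoone(P,\delta_y))^2\) and \(P\mapsto \mathrm{Var}_{Y\sim P}(Y)\)
agree on \(\mathcal P_2\), so they agree \(Q\)-almost surely. Both are nonnegative. Their \(Q\)-expectations therefore coincide in \([0,\infty]\), with no integrability assumption beyond the one stated. This yields
\(\widetilde U_{\mathrm{al}}^2(Q)=\mathbb E_{P\sim Q}[\mathrm{Var}_{Y\sim P}(Y)]\).

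\textbf{Main obstacle: measurability.} The delicate point is that the integrand must be measurable on \(\mathcal P(Y)\) under the Borel \(\sigma\)-algebra of the weak (or \(W_2\)) topology. I would handle this by writing \(\mathrm{Var}(P)=\int y^2\,\mathrm dP-(\int y\,\mathrm dP)^2\) on \(\mathcal P_2\). Each moment map \(P\mapsto\int f\,\mathrm dP\) with \(f\) continuous and nonnegative is a monotone limit of bounded continuous truncations \(\int \min(f,n)\,\mathrm dP\). Each truncation is weakly continuous, so each moment map is Borel, and hence so is \(\mathrm{Var}(P)\). Its a.s.\ equality with the infimum expression then makes the latter measurable, after completing \(Q\) if necessary.

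Finally, combined with Lemma~\ref{lem:ual_upper_bound}, this gives \(U_{\mathrm{al}}^2(Q)=\mathbb E_{P\sim Q}[\mathrm{Var}_{Y\sim P}(Y)]\). Specialized to \(\widehat Q_B(s,a)\), it recovers \eqref{eq:au_critic}.
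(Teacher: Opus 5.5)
Your proposal is correct and matches the paper's argument: the paper obtains the corollary in one line by applying Lemma~\ref{lem:w2_dirac_variance} pointwise inside the expectation that defines \(\widetilde U_{\mathrm{al}}^2(Q)\) in Lemma~\ref{lem:ual_upper_bound}. Your extra care is also well placed. The convexity caveat is genuinely needed: for \(Y=\{0,1\}\) and \(P=\tfrac12\delta_0+\tfrac12\delta_1\), the infimum is \(1/2\) while the variance is \(1/4\). The measurability argument goes through once you split the first-moment map as \(y=y^+-y^-\) into nonnegative continuous parts.
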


We take \(\widetilde U_{\mathrm{al}}^2(Q)\) as the basic aleatoric
quantity; by Lemma~\ref{lem:ual_upper_bound} it equals
\(U_{\mathrm{al}}^2(Q)\) and thus is consistent with the distance-based
interpretation. 
Applying Corollary~\ref{cor:ual_variance} to $\widehat Q_B(s,a)$ in \eqref{eq:q_critic} gives
$
U_{\mathrm{al}}^2\!\bigl(\widehat Q_B(s,a)\bigr)
=
\frac{1}{B}\sum_{b=1}^{B}\mathrm{Var}_{Y\sim P_b}(Y),$
which is exactly $\mathrm{AU}_{\mathrm{crit}}(s,a)$ in \eqref{eq:au_critic}.

%
%

\section{Consistency of Second-order UQ Estimators}
\label{app:eu_au_consistency} 
In implementation, each draw $P_b$ is obtained by independently sampling an ensemble member and a dropout mask (and the IQN quantile fractions), which yields i.i.d.\ draws from $Q_{\mathrm{crit}}(s,a)$.
The corresponding empirical second-order measure is $\widehat Q_B(s,a)$ in \eqref{eq:q_critic}.

\begin{theorem}[Consistency of Monte-Carlo EU/AU estimators]
	\label{thm:consistency_eu_au}
 
    Define the population indices by
	$
	\mathrm{EU}^\star_{\mathrm{crit}}(s,a):=\mathbb E\!\left[(\Wtwoone(P,p^\star))^2\right], 
	$ $ 
	\mathrm{AU}^\star_{\mathrm{crit}}(s,a):=\mathbb E\!\left[\mathrm{Var}_{Y\sim P}(Y)\right].
	\label{eq:pop_eu_au_app}
	$
	Then, as $B\to\infty$,
	$
	\widehat{\mathrm{EU}}_{\mathrm{crit},B}(s,a)
	$
	$\xrightarrow{\mathrm{almost\, surely}}$
	$ \mathrm{EU}^\star_{\mathrm{crit}}(s,a),
	$
	$
	\widehat{\mathrm{AU}}_{\mathrm{crit},B}(s,a)
	$
	$\xrightarrow{\mathrm{almost\, surely}} \mathrm{AU}^\star_{\mathrm{crit}}(s,a).
	\label{eq:eu_au_consistency_app}
	$
\end{theorem}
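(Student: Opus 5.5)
The plan is to reduce both claims to the strong law of large numbers (SLLN) for i.i.d.\ draws $P_1,P_2,\dots\sim Q_{\mathrm{crit}}(s,a)$. The EU case also needs a short argument because the barycenter is re-estimated from the same sample. Throughout I assume the integrability the statement implicitly requires: $\mathbb E\bigl[\int_0^1 F_P^{-1}(\tau)^2\,\mathrm d\tau\bigr]<\infty$, i.e., $P$ has a finite second moment in expectation. Here $p^\star$ is the population barycenter, the minimiser of $p\mapsto\mathbb E[(\Wtwoone(P,p))^2]$.

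\textbf{AU.} The estimator $\widehat{\mathrm{AU}}_{\mathrm{crit},B}=\frac1B\sum_{b}\mathrm{Var}_{Y\sim P_b}(Y)$ is an average of the i.i.d.\ nonnegative random variables $\mathrm{Var}_{Y\sim P_b}(Y)$. Their mean is bounded by the second moment, so it is finite. The SLLN then gives almost-sure convergence to $\mathbb E[\mathrm{Var}_{Y\sim P}(Y)]=\mathrm{AU}^\star_{\mathrm{crit}}$ directly.

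\textbf{EU.} The key move is to embed everything in the Hilbert space $H=L^2(0,1)$ through the map $P\mapsto F_P^{-1}$.
\begin{itemize}
\item By Lemma~\ref{lem:w2_quantile}, $\Wtwoone(P,p)=\|F_P^{-1}-F_p^{-1}\|_H$.
\item By Lemma~\ref{lem:1d_barycenter_quantile}, the empirical barycenter has quantile function $\bar f_B:=\frac1B\sum_b f_b$, where $f_b:=F_{P_b}^{-1}$.
\item The same Hilbert-space argument identifies the population barycenter as $F_{p^\star}^{-1}=\mathbb E[f]$, the Bochner mean. This holds because the Fréchet functional $\mathbb E\|f-g\|_H^2$ is minimised over all of $H$ at $g=\mathbb E f$. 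That $g$ is nondecreasing, hence itself a quantile function, so the minimiser stays inside the convex cone of quantile functions.
\end{itemize}
With these identifications the estimator becomes the Hilbert-space sample variance. The standard bias–variance decomposition gives
\[
\widehat{\mathrm{EU}}_{\mathrm{crit},B}=\frac1B\sum_b\|f_b-\bar f_B\|_H^2=\frac1B\sum_b\|f_b-\mathbb E f\|_H^2-\|\bar f_B-\mathbb E f\|_H^2 .
\]
The first term converges almost surely to $\mathbb E\|f-\mathbb Ef\|_H^2=\mathrm{EU}^\star_{\mathrm{crit}}$ by the scalar SLLN, since it is integrable under the second-moment assumption.

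The main obstacle is the second term: it requires a strong law in the infinite-dimensional space $H$. I would invoke the SLLN for i.i.d.\ Bochner-integrable random elements of a separable Banach space (Mourier's theorem). This applies because $\mathbb E\|f\|_H\le(\mathbb E\|f\|_H^2)^{1/2}<\infty$ and $L^2(0,1)$ is separable. It gives $\bar f_B\to\mathbb E f$ in $H$ almost surely, so the correction term vanishes.

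Measurability of $P\mapsto F_P^{-1}$ into $H$ also needs to be checked. It follows because this map is an isometry from $(\mathcal P_2(Y),\Wtwoone)$ into $H$, hence continuous. Combining the two terms yields $\widehat{\mathrm{EU}}_{\mathrm{crit},B}\to\mathrm{EU}^\star_{\mathrm{crit}}$ almost surely. If one prefers to avoid the Banach-space SLLN, a fallback is to truncate $f$ to a finite-dimensional projection, apply the componentwise SLLN there, and control the tail of the projection uniformly via $\mathbb E\|f-\Pi_n f\|^2\to0$.
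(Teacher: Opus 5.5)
Your argument is correct. The AU part coincides with the paper's, but for EU you take a genuinely different route.

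The paper stays in the metric space $(\mathcal P_2(Y),W_2)$. It imports strong consistency of empirical Wasserstein barycenters from the literature to get $W_2(\widehat p_B,p^\star)\to0$ almost surely. It then splits the error into a centered strong-law term $(I)$ and a perturbation term $(II)$, which it controls with $|x^2-y^2|\le(x+y)|x-y|$ and the triangle inequality.

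You instead use the one-dimensional isometry $P\mapsto F_P^{-1}$ into $H=L^2(0,1)$. Under it both barycenters become means, and the estimator satisfies the exact identity $\widehat{\mathrm{EU}}_{\mathrm{crit},B}=\frac1B\sum_b\|f_b-\mathbb Ef\|_H^2-\|\bar f_B-\mathbb Ef\|_H^2$. The only nontrivial input is then the Banach-space strong law, and that law itself proves the barycenter consistency the paper cites, since $\|\bar f_B-\mathbb Ef\|_H=W_2(\widehat p_B,p^\star)$.

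Your route buys several things:
\begin{itemize}
\item It is self-contained.
\item It pins down existence and uniqueness of $p^\star$, which the statement leaves implicit.
\item It exposes the finite-sample bias exactly. Taking expectations in the identity gives $\mathbb E\,\widehat{\mathrm{EU}}_{\mathrm{crit},B}=\frac{B-1}{B}\,\mathrm{EU}^\star_{\mathrm{crit}}$, a downward bias with a natural $B/(B-1)$ correction.
\item It states the right hypothesis, $\mathbb E\int_0^1F_P^{-1}(\tau)^2\,\mathrm d\tau<\infty$. The paper's proof only invokes $P\in\mathcal P_2(Y)$ almost surely, which by itself does not give the integrability of $\mathrm{Var}_{Y\sim P}(Y)$ and $(W_2(P,p^\star))^2$ that its strong-law steps use.
\end{itemize}

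The paper's route, in turn, buys generality. Nothing in the $(I)$/$(II)$ decomposition is one-dimensional, so it would transfer to multivariate return laws or other ground metrics wherever barycenter consistency is available. Your exact identity relies on the flat Hilbert structure of one-dimensional $W_2$.
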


\begin{IEEEproof}
	Let $\{P_b\}_{b=1}^B$ be i.i.d.\ from $Q_{\mathrm{crit}}(s,a)$ and let $\widehat p_B$ be an empirical barycenter as in \eqref{eq:critic_barycenter}. By the existence and strong consistency of empirical Wasserstein barycenters on $\mathcal P_2(Y)$ \cite{refW2BarycenterConsistency}, we have $\Wtwoone(\widehat p_B,p^\star)\to 0$ almost surely. For the AU part, from \eqref{eq:au_critic},
	$\widehat{\mathrm{AU}}_{\mathrm{crit},B}(s,a)=\frac{1}{B}\sum_{b=1}^B \mathrm{Var}_{Y\sim P_b}(Y)$.
	Since $P\in\mathcal P_2(Y)$ almost surely, $\mathrm{Var}_{Y\sim P}(Y)<\infty$ almost surely and is integrable; thus by the strong law of large numbers,
	$\widehat{\mathrm{AU}}_{\mathrm{crit},B}(s,a)\to \mathbb E[\mathrm{Var}_{Y\sim P}(Y)]=\mathrm{AU}^\star_{\mathrm{crit}}(s,a)$ almost surely.
	For the EU part, by \eqref{eq:eu_critic},
	$\widehat{\mathrm{EU}}_{\mathrm{crit},B}(s,a)=\frac{1}{B}\sum_{b=1}^B (\Wtwoone(P_b,\widehat p_B))^2$.
	Add and subtract the population barycenter:
	\begin{align}
		&\widehat{\mathrm{EU}}_{\mathrm{crit},B}(s,a)-\mathrm{EU}^\star_{\mathrm{crit}}(s,a) \nonumber
		\\&\quad=
		\underbrace{\Bigl(\tfrac{1}{B}\sum_{b=1}^B (\Wtwoone(P_b,p^\star))^2-\mathbb E[(\Wtwoone(P,p^\star))^2]\Bigr)}_{(I)}
		\nonumber\\
		&\quad\quad+
		\underbrace{\tfrac{1}{B}\sum_{b=1}^B\Bigl((\Wtwoone(P_b,\widehat p_B))^2-(\Wtwoone(P_b,p^\star))^2\Bigr)}_{(II)}.
		\label{eq:eu_decomp_app}
	\end{align}
	Term $(I)\to 0$ almost surely by the strong law since $(\Wtwoone(P,p^\star))^2$ is integrable under $P\in\mathcal P_2(Y)$. 
	For $(II)$, using $|x^2-y^2|\le (x+y)|x-y|$ with $x=\Wtwoone(P_b,\widehat p_B)$ and $y=\Wtwoone(P_b,p^\star)$, then the triangle inequality gives
	\begin{align}
		&\bigl|(\Wtwoone(P_b,\widehat p_B))^2-(\Wtwoone(P_b,p^\star))^2\bigr| \nonumber\\
		&\quad \le 
		\bigl(2\Wtwoone(P_b,p^\star)+\Wtwoone(\widehat p_B,p^\star)\bigr)\,\Wtwoone(\widehat p_B,p^\star).
		\label{eq:eu_pointwise_bound_app}
	\end{align}
	Averaging \eqref{eq:eu_pointwise_bound_app} over $b$ yields
	\begin{equation}
		|(II)|
		\le
		\Bigl(2\cdot\tfrac{1}{B}\sum_{b=1}^B \Wtwoone(P_b,p^\star)+\Wtwoone(\widehat p_B,p^\star)\Bigr)\,\Wtwoone(\widehat p_B,p^\star).
		\label{eq:eu_avg_bound_app}
	\end{equation}
	Moreover, $P\in\mathcal P_2(Y)$ implies $\mathbb E[\Wtwoone(P,p^\star)]<\infty$, hence by the strong law
	$\tfrac{1}{B}\sum_{b=1}^B \Wtwoone(P_b,p^\star)\to \mathbb E[\Wtwoone(P,p^\star)]$ almost surely, while $\Wtwoone(\widehat p_B,p^\star)\to 0$ almost surely.
	Therefore the right-hand side of \eqref{eq:eu_avg_bound_app} converges to $0$ almost surely, so $(II)\to 0$ almost surely.
	Combining $(I)$ and $(II)$ in \eqref{eq:eu_decomp_app} proves
	$\widehat{\mathrm{EU}}_{\mathrm{crit},B}(s,a)\to \mathrm{EU}^\star_{\mathrm{crit}}(s,a)$ almost surely.
\end{IEEEproof}

\vfill

\end{document}